\newtheorem{theorem}{Theorem}
\newtheorem{remark}{Remark}
\newtheorem{proposition}{Proposition}
\newtheorem{assumption}{Assumption}
\title{Theoretical Understanding of In-Context Learning in Shallow Transformers with Unstructured Data}
\author{%
    Yue Xing\\
    Department of Statistics and Probability\\
    Michigan State University\\
    \texttt{xingyue1@msu.edu}\\
    \And
    Xiaofeng Lin\\
    Department of Statistics\\
    University of California Los Angeles\\
    \texttt{bernardo1998@g.ucla.edu}\\
    \And
    Chenheng Xu\\
    Department of Statistics\\
    University of California Los Angeles\\
    \texttt{chenhengx0101@g.ucla.edu}\\
    \And
    Namjoon Suh\\
    Department of Statistics\\
    University of California Los Angeles\\
    \texttt{namjsuh@ucla.edu}\\
    \And
    Qifan Song\\
    Department of Statistics\\
    Purdue University\\
    \texttt{qfsong@purdue.edu}
    \And
    Guang Cheng\\
    Department of Statistics\\
    University of California Los Angeles\\
    \texttt{guangcheng@ucla.edu}\\
}
\begin{document}

\maketitle

\begin{abstract}

{Large language models (LLMs) are powerful models that can learn concepts at the inference stage via in-context learning (ICL). While theoretical studies, e.g., \cite{zhang2023trained}, attempt to explain the mechanism of ICL, they assume the input $x_i$ and the output $y_i$ of each demonstration example are in the same token (i.e., structured data). However, in real practice, the examples are usually text input, and all words, regardless of their logic relationship, are stored in different tokens (i.e., unstructured data \cite{wibisono2023role}). To understand how LLMs learn from the unstructured data in ICL, this paper studies the role of each component in the transformer architecture and provides a theoretical understanding to explain the success of the architecture. In particular, we consider a simple transformer with one/two attention layers and linear regression tasks for the ICL prediction. We observe that (1) a transformer with two layers of (self-)attentions with a look-ahead attention mask can learn from the prompt in the unstructured data, and (2) positional encoding can match the $x_i$ and $y_i$ tokens to achieve a better ICL performance.}

\end{abstract}

\section{Introduction}


Large Language Models (LLMs), built upon transformer architectures, have experienced a surge in popularity in recent years, demonstrating their superior power in answering human questions and finishing various tasks. In addition to the extensive empirical studies, theoretical works started to investigate the behavior of LLMs recently. While LLMs share similar properties as other deep neural networks, LLMs still enjoy some unique characteristics, e.g., in-context learning (ICL).

Numerous studies in the literature investigate transformers to explain the success of ICL. For example, \cite{zhang2023trained} explores the convergence of transformers and their ICL ability using transformers with one linear attention layer and structured data. Structured data means that the input and output of one in-context example are in the same token in the input sequence, i.e.,
\begin{eqnarray}
    E_1=\begin{bmatrix}
        x_1 & x_2 & x_3 & \ldots & x_D & x_q\\
        y_1 & y_2 & y_3 & \ldots & y_D & 0
    \end{bmatrix},
\end{eqnarray}
where $(x_i,y_i)$s are in-context examples, and $x_q$ is the target input. After feeding $E_1$ into a well-trained transformer, the transformer can compare the similarity between $x_q$ and other $x_i$s and output a prediction of $y_q$ via assembling $y_i$s, i.e., conduct an in-context learning directly in the \textbf{inference stage} without learning the relationship between $x_q$ and $y_q$ in the \textbf{training stage}. In later works, this theoretical framework is extended to softmax attention and multi-head attention, e.g., \citet{zhang2023trained,huang2023context,cui2024superiority,chen2024training}.



However, there are some missing pieces in the study of \citet{zhang2023trained,huang2023context,cui2024superiority,chen2024training}:
\paragraph{Separating columns of $x_i$ and $y_i$} 

In real practice, when inputting the examples, e.g., sentences, each word is a token, and the whole sentence will take up several columns in the input embedding matrix. In contrast, the prompt format $E_1$ merges every information of one example into one column, which is not realistic. Observing this gap between $E_1$ and the common practice, we step one way from $E_1$.
Similar to the implementation of \citet{garg2022can} and the discussion of \citet{wibisono2023role}, we aim to train a transformer to conduct ICL using the following prompt format with ``unstructured'' data:
    $$\label{eqn:position}
    E_2=\begin{bmatrix}
    x_1 & 0 & x_2 & 0 & \ldots & x_D & 0 & x_{\text{q}} \\
    0 & y_1 & 0 & y_2 & \ldots & 0 & y_D & 0 
\end{bmatrix}.
$$
In \citet{garg2022can,wibisono2023role}, it is empirically observed that deep transformers are able to learn from unstructured data. In contrast, we explicitly show the critical difference between one and two attention layers, and provide mathematical intuitions that the attention mask together with two attention layers plays an important role.

\begin{wrapfigure}{r}{0.45\textwidth}
\vspace{-0.25in}
    \centering
        \includegraphics[scale=0.55]{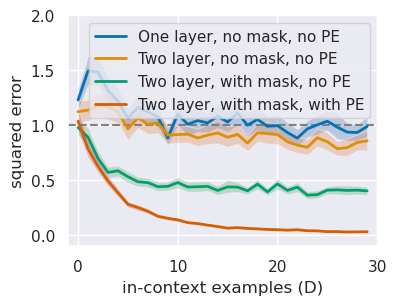}
    \caption{ICL performance with different number of layers, mask, position. The two-layer structure and the attention mask are crucial, while positional encoding significantly improves performance.}
    \label{fig:all}
    \vspace{-0.2in}
\end{wrapfigure}
\paragraph{Other components of transformers} While \citet{zhang2023trained,huang2023context,cui2024superiority,chen2024training} studies the performance of a single-layer transformer, 
they do not consider other important components: attention mask and positional encoding (PE).


While existing literature mainly focuses on large-scale experiments to demonstrate the effectiveness of LLMs in learning from structured/unstructured data, in our paper, observing the above missing analysis, we consider small-scale shallow transformers and control the experiment settings to explicitly show the benefit and interaction of each component in the transformer. In particular, our key observations are summarized in Figure \ref{fig:all}. The contributions are as follows:

\begin{itemize}[leftmargin=0.5cm]
    \item As in Figure \ref{fig:all}, for the unstructured data $E_2$, a one-layer transformer fails. With a two-layer transformer and a look-ahead attention mask, the transformer can conduct ICL. Both the two-layer structure and attention mask are crucial. (Section \ref{sec:two_layer})
    
    \item With a two-layer transformer and attention mask, PE can further improve performance. Based on our theoretical results, PE can better match $x_i$ and $y_i$s in the first attention layer to transform the unstructured data to structured data. A larger input embedding dimension and multiple layers of attention can also help improve the matching. Then, in the second attention layer, one can use structured data to perform ICL. (Section \ref{sec:pos})

    
\end{itemize}

Note that our study doesn't claim the superiority of unstructured data over structured data. Based on our understanding, the transformer needs to match $(x_i,y_i)$s, i.e., process the unstructured data into structured data in order to perform ICL. From this perspective, structured data is actually preferred to unstructured data. However, the study of unstructured data is still necessary because it is not avoidable in real practice. For instance, text prompts are usually not curated and contain no direct structural/causal information between words. The LLMs need to capture the relationship among the words first, and then perform predictions.\color{black}

\section{Related Works}
\paragraph{Theoretical studies} It is well known that LLMs can perform in-context learning: the LLMs can infer the output based on the input and given examples in the prompt in the testing stage, without being trained using related data in the training stage. Various empirical studies are conducted to understand ICL, e.g., \cite{garg2022can}. 

Recently, there are some researches studying the theoretical perspectives of ICL. The first direction considers the simple one-attention-layer architecture to study how linear regression and other simple statistics tasks can be performed in such a neural network using structured data. Various theories can be investigated in terms of the expressive power of the transformer, training convergence, and generalization, e.g., \cite{von2023transformers,ahn2023transformers,akyurek2022learning,zhang2023trained,ahn2023linear,zhang2023trained,huang2023context,han2023context,cui2024superiority,chen2024training,kim2024transformers,nichani2024transformers}. In addition to softmax attention, the linear attention is one of the proposed methods that has been widely studied in the theoretical literature as well, e.g., ~\cite{li2020linear, katharopoulos2020transformers, shen2021efficient, zheng2022linear, liu2023linrec, qin2022cosformer}. 

\paragraph{Transformer Architecture}

 As will be introduced later, there are several components in the transformer architecture considered in our paper: attention mask, multiple layers of attention, PE.

Masking is needed to prevent the attention mechanism of a transformer from ``cheating'' in the decoder when training.
While tasks such as translation need the information from the whole sentence, in other tasks, attention mask has shown empirical successes in many studies, specifically in the field of computer vision and multi-modality models~\cite{cheng2022masked, fan2021mask, pang2019mask, harley2017segmentation, song2020bi}.

Another important component in transformer architecture is its compositional structure of multiple attention layers.  
We are aware of two works~\cite{van2019does, simoulin2021many} which provided interesting analysis on the internal layer mechanisms of BERT.

Positional encoding (PE), suggested by~\cite{vaswani2017attention}, successfully facilitates the transformers to know the positional information of each word in the sentence. 
PE originally introduced for language-based models has many variants based on different data modalities; 
the instances include tree-based data~\cite{shiv2019novel}, graph data~\cite{bruel2022rewiring}, time series data~\cite{nyborg2022generalized}, video data~\cite{ning2020polar}, etc.
Nonetheless, there is not much literature on the theoretical understanding of PE.
In the context of ICL,~\cite{bai2023transformers} proves transformers with PE can implement more complex ICL procedures involving in-context algorithm selection.

\section{Notations and Architecture}\label{sec:notation}
In the following, we define the data distribution and the transformer to be considered.

\paragraph{Data} In this study, we assume all examples and the final query have an $x$ following i.i.d. $N(0,I_d)$. The response $y$ is a noiseless linear model, $y=x^{\top}\theta$, for some $\theta\sim N(0,I_d/d)$. All examples and the query share the same $\theta$ in the same prompt. For different prompts, $\theta$s are different. This data assumption is widely used in existing theoretical literature \cite{zhang2023trained,huang2023context,cui2024superiority}. Since our focus on the role of PE rather than the relationship between data distribution of ICL ability, we do not consider the other complex relationship in \cite{chen2024training}.

To train the transformer, we assume there are infinitely many samples of prompts with their examples and minimize the following loss
\begin{eqnarray*}
    \mathbb{E}_{\{x_i\}_{i\in[D]},x_{\text{q}},\theta}\sum_{i=1}^D (y_i-\widehat y_i)^2+(y_{\text{q}}-\widehat y_{\text{q}})^2.
\end{eqnarray*}
where $\widehat y_i$ and $\widehat y_{\text{q}}$ are the prediction of $y_i$ and $y_{\text{q}}$, whose definition can be found in the later paragrph.

\paragraph{Transformer architecture}  When considering a transformer with two layers of attention, we consider the following structure. In the first attention layer, we use a softmax (self-)attention with a look-ahead attention mask. In particular, define $O$ as the output of the first attention layer, we have
$$f(E):=O:=W_{in}E+P+W_{V,1}(W_{in}E+P)\phi\left( (W_{in}E+P)^{\top} W_{KQ,1}(W_{in}E+P)+M\right),$$
where $P\in\mathbb{R}^{p\times (2D+1)}$ is the PE matrix, $E\in\mathbb{E}^{(d+1)\times (2D+1)}$ is the input matrix, $W_{in}\in\mathbb{R}^{p\times (d+1)}$ is an embedding matrix mapping $(x_i,0)$ and $(0,y_i)$ columns of $E$ into a $p$-dimensional space. The function $\phi$ is a column-wise softmax operator, i.e., for a vector $v$, $\phi(v)=\exp(v)/\|\exp(v)\|_1$. In addition, the attention mask matrix $M$ is an upper-triangular matrix, with all non-zero elements as $-\infty$. In the second layer, we further pass $O$ into another attention layer to obtain the corresponding output $O_2$, and finally output $W_{out}(O_2)_{:,2D+1}$, a linear function of the last column of $O_2$, as the predicted $y_q$. The corresponding parameters in the second layer are $W_{KQ,2}$ and $W_{V,2}$. 
In the second layer, we use linear attention to simplify the derivations, and its detailed architecture design can be found in the corresponding theorem. The setup of one-layer transformers is similar, with details postponed to the corresponding theorem.

In the attention mechanism, the quantities calculated from $\phi$ are called the attention scores. When $\phi$ is the column-wise softmax operator, for each column of $$\phi\left( (W_{in}E+P)^{\top} W_{KQ,1}(W_{in}E+P)+M\right),$$all the elements are non-negative, and sum up to 1. As a result, when multiplying the $W_{V,1}(W_{in}E+P)$, it can be viewed as a weighted average of the different columns, i.e., $W_{V,1}(W_{in}E+P)\phi_{:,i}=\sum \phi_{j,i}(W_{V,1}(W_{in}E+P))_{j}$, which can be viewed as a kernel method. Similarly, for linear attention, e.g., \cite{zhang2023trained}, we obtain a linear kernel.


\color{black}

\paragraph{Experiment Settings}

Throughout this paper, we modify the implementation of \cite{garg2022can} to examine the effect of the different components in the transformer. In the tasks, we take $x$ dimension as $d=5$ and the number of examples in each prompt as $D=30$. We train different models for 50k to ensure the convergence of optimization and the ICL prediction performance no longer changes. A bad performance under such a large number of steps implies the failure of convergence, e.g., Figure \ref{fig:two_layer}. Table \ref{tab:code} in the appendix summarizes how to control each component in the transformer. In addition, for all experiments, we show the result for one repetition in the main content, and postpone the ten-repetition results in Appendix \ref{sec:appendix:simulation}.

In order to correctly measure the effect of PE and mask, we use the following procedure to calculate the testing loss. In the first step, to predict $y_1$ using $x_1$, we only include $(x_1,0)$ in the prompt. To predict $y_2$ using $(x_1,0),(0,y_1)$ and $(x_2,0)$, we take $E_2=((x_1,0),(0,y_1),(x_2,0))$. With an attention mask, the training and testing loss is reduced to the same formula. Without an attention mask, during training, PE will ``steal" information from the later $y_i$ tokens and achieve an almost-zero loss. However, such a strategy does not help prediction in the inference phase when the input $E_2$ does not contain the corresponding $y_i$s.


\color{black}

\section{Two-Layer Transformer + Attention Mask}\label{sec:two_layer}
In this section, we demonstrate that the two-layer structure and the attention mask together play a crucial role in ICL when the examples are in a format of $E_2$. We do not consider PE in this section.

\subsection{Empirical Observation}

Under $E_2$, it is crucial that the transformer architecture is capable of connecting each $y_i$ with its corresponding $x_i$. This goal cannot be achieved without any positional information built into the model. 
We find that the transformer can connect $x_i$s to $y_i$s when there is more than one layer of attention, along with the attention mask. 

\begin{figure}
\centering
\includegraphics[scale=0.6]{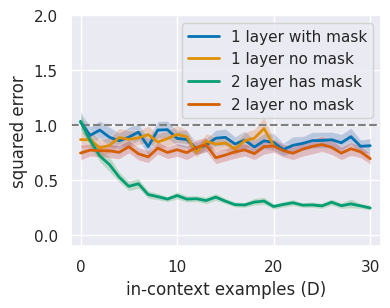}
   \caption{Performance of one-layer and two-layer transformers with/without attention mask (no PE). }
   \vspace{-0.2in}
    \label{fig:two_layer}
\end{figure}

Figure \ref{fig:two_layer} demonstrates our preliminary experiment results. We take $x_i$ dimension $d=5$, embedding dimension $p=64$, and number of heads $h=1$. We use $E_2$ as the input to the transformer and examine the ICL performance. Figure \ref{fig:two_layer} considers transformers of one/two layers of attention, with/without attention mask, and without PE. One can see that when using a two-layer transformer together with the attention mask, the prediction loss quickly drops when increasing the number of examples $D$, while missing either the second layer or the attention mask results in a failure of ICL. 


\subsection{Why One-Layer Transformer Fails}

In the following, we explain the poor performance of single-layer transformers.

Intuitively, $\widehat y_{\text{q}}$ is a weighted average of the last entry across all tokens in the prompt. In a simplified scenario where $W^V_{p,:}=(0,\ldots,0 , 1)$, this becomes $\widehat y_{\text{q}}=\sum_{i=1}^D 0\cdot\phi_{2i-1,D}+\sum_{i=1}^Dy_i\cdot\phi_{2i,D}$, where $\phi_{k,D}$ is the attention score of $x_{\text{q}}$ with the $k$-th token in the prompt. 
One attention layer, regardless of $h$, aims to capture the column dependency of its input prompt. Therefore, given a single layer of attention,  (i) $\phi_{2i,D}$'s can only capture the dependency information between $(0,y_i)$ tokens and $(x_{\text{q}},0)$ and fails to deliver any useful in-context knowledge to $\widehat y_{\text{q}}$; (ii) $\phi_{2i-1,D}$'s do capture similarity information about $x_i$'s and $x_{\text{q}}$ but cannot pass this information to $\widehat y_{\text{q}}$ as they are multiplied by 0's. This leads to the failure of $\widehat y_{\text{q}}$ prediction.

The following theorem converts the above intuition into a theorem:
\begin{theorem}[One layer attention is not sufficient]\label{thm:one_layer}
   Consider a transformer with one layer of softmax attention. Assume there is no $W_{in}$ and PE. Assume $W_{KQ,1}$ is in the form of
    \begin{eqnarray*}
        W_{KQ,1}=\begin{bmatrix}
            A & 0\\ b & 0
        \end{bmatrix}
    \end{eqnarray*}
    for some $A\in\mathbb{R}^{d\times d}$ and $b\in\mathbb{R}^d$ such that $\|A\|$ and $\|b\|$ are both bounded. Assume $D\rightarrow\infty$, then regardless of whether the attention mask is in the transformer or not, the optimal solution of $b$ satisfies $\|b\|^2\rightarrow 0$, and the minimal value of $\mathbb{E}(\widehat y_q-y_q)^2$ is in $\Theta(1)$.
\end{theorem}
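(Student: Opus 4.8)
The plan is to reduce the prediction $\widehat y_q$ to an explicit low-dimensional expression, pass to the $D\to\infty$ limit via the law of large numbers, and then exploit a rotational symmetry of the data to show that the limiting predictor carries no information about the only aspect of $\theta$ that is relevant to $y_q$. First, since there is no $W_{in}$ and no PE, the input to the single attention layer is $E_2$ itself: $(x_i,0)$ at the odd positions, $(0,y_i)$ at the even positions, and $(x_q,0)$ at the last position. The block form of $W_{KQ,1}$ makes the pre-softmax score of the last query against these three token types equal to $x_i^\top A x_q$, $y_i\,b^\top x_q$, and $x_q^\top A x_q$, respectively; for the last query the look-ahead mask is vacuous, which is exactly why the statement is mask-agnostic. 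Folding $W_{V,1}$ and the final read-out into a single row vector, we obtain
\begin{equation*}
\widehat y_q \;=\; \alpha_0^\top x_q \;+\; \beta^\top\Big(\sum_{i=1}^D \phi_{2i-1}x_i + \phi_{2D+1}x_q\Big) \;+\; \gamma\sum_{i=1}^D \phi_{2i}\,y_i ,
\end{equation*}
where $\alpha_0,\beta\in\mathbb{R}^d$, $\gamma\in\mathbb{R}$ are determined by the parameters, $\phi_{2i-1}=e^{x_i^\top A x_q}/Z$, $\phi_{2i}=e^{y_i b^\top x_q}/Z$, $\phi_{2D+1}=e^{x_q^\top A x_q}/Z$, and $Z=\sum_i e^{x_i^\top A x_q}+\sum_i e^{y_i b^\top x_q}+e^{x_q^\top A x_q}$.

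Conditioning on $(x_q,\theta)$, the $x_i$ are i.i.d.\ $N(0,I_d)$ and independent of $(x_q,\theta)$, so $Z/D$, $\tfrac1D\sum_i e^{x_i^\top A x_q}x_i$ and $\tfrac1D\sum_i e^{y_i b^\top x_q}y_i$ converge almost surely, while $\phi_{2D+1}\to0$. The limits follow from the Gaussian identities $\mathbb{E}\,e^{a^\top x}=e^{\|a\|^2/2}$ and $\mathbb{E}[e^{a^\top x}x]=e^{\|a\|^2/2}a$, applied with $a=A^\top x_q$ for the $x$-tokens and, since $y_i=x_i^\top\theta$, with $a=(b^\top x_q)\,\theta$ for the $y$-tokens; boundedness of $\|A\|,\|b\|$ is used to pass the limit inside the expectation. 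Thus $\widehat y_q\to G(x_q,\theta)$ in $L^2$ for some function $G$, and the crucial point is that $G$ depends on $\theta$ \emph{only through} $\|\theta\|^2$: the $x$-token contributions do not involve $\theta$, while the $y$-token contributions enter only via $b^\top x_q$ (independent of $\theta$) and via Gaussian moments of $(b^\top x_q)x_i^\top\theta$, which are functions of $(b^\top x_q)^2\|\theta\|^2$. Hence $G=G(x_q,\|\theta\|^2)$.

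Because $x_q\perp\theta$ and, given $\|\theta\|$, the direction of $\theta$ is uniform on a sphere, $\mathbb{E}[x_q^\top\theta\mid x_q,\|\theta\|]=0$ and $\mathrm{Var}(x_q^\top\theta\mid x_q,\|\theta\|)=\|\theta\|^2\|x_q\|^2/d$. Since $G$ is a function of $(x_q,\|\theta\|^2)$ only,
\begin{equation*}
\lim_{D\to\infty}\mathbb{E}(\widehat y_q-y_q)^2=\mathbb{E}\big(G(x_q,\|\theta\|^2)-x_q^\top\theta\big)^2=\mathbb{E}\,G^2+\mathbb{E}\Big[\tfrac{\|\theta\|^2\|x_q\|^2}{d}\Big]=\mathbb{E}\,G^2+1\ \ge\ 1 ,
\end{equation*}
using $\mathbb{E}\|x_q\|^2=d$ and $\mathbb{E}\|\theta\|^2=1$. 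Taking $W_{out}=0$ gives $\widehat y_q\equiv0$ and value $1$ for every $D$, so the minimum over the parameter class is $1+o(1)$; being bounded above and below by positive absolute constants, it is $\Theta(1)$.

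For the claim $\|b\|^2\to0$: by the two previous paragraphs the entire effect of $b$ on $\widehat y_q$ is to add a component measurable with respect to $(x_q,\|\theta\|^2)$, hence orthogonal to $y_q$, which can only inflate $\mathbb{E}\,G^2$ (and, in the full training loss, the variance of each $\widehat y_i$). Since $\gamma$ cannot be taken to $0$ without hurting the early predictions $\widehat y_i$ (equivalently, one may optimize over $b$ last), every nonzero $b$ strictly increases the objective, so the minimizing $b$ drives this spurious component to $0$, and with the norm bound this forces $\|b\|^2\to0$. The step I expect to be the main obstacle is the law-of-large-numbers limit: turning it into a statement that holds \emph{uniformly} over the norm-bounded parameter class — controlling the $D$-dependent integrability (the sums contain $e^{x_i^\top A x_q}$ and the unbounded $y_i$) carefully enough to interchange $\lim_{D\to\infty}$ with both $\mathbb{E}$ and the infimum over parameters, so that the bound $\ge 1$ applies to the minimum and not merely pointwise in the parameters. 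The rotational-symmetry observation is the conceptual core; the rest is bookkeeping.
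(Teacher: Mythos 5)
Your argument is essentially the paper's: both pass to the $D\to\infty$ limit of the softmax-weighted average using the Gaussian MGF identities and then exploit the symmetry of $\theta$'s distribution --- you phrase it as the limiting predictor depending on $\theta$ only through $\|\theta\|^2$, the paper as the predictor being unchanged under $\theta\mapsto-\theta$ while $y_q$ flips sign --- to conclude the optimal prediction is $\approx 0$ and the loss is $\Theta(1)$. The one place you are looser than the paper is the $\|b\|^2\to 0$ step: by allowing a general readout (your $\gamma$) you open the $\gamma=0,\ b\neq 0$ escape route and dismiss it only informally, whereas the paper works with the value readout fixed to the last coordinate (so $\gamma$ is effectively $1$), under which the same symmetry argument directly forces $\|b\|\to 0$.
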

The proof of Theorem \ref{thm:one_layer} can be found in Appendix \ref{sec:appendix:one_layer}. The key idea to prove Theorem \ref{thm:one_layer} is that, when taking $\theta$ and $-\theta$, the corresponding $\widehat y_q$s are similar. Thus, the optimal solution is that $\widehat y_q\approx0$.


\color{black}

\subsection{Why Two-Layer Transformer Succeeds}

To explain why two-layer transformer + attention mask facilitates ICL in $E_2$, the following theorem demonstrates that even without PE, the transformer is able to perform ICL using unstructured data:

\begin{wrapfigure}{r}{0.5\textwidth}
\centering
    \vspace{-0.2in}
    \includegraphics[scale=0.6]{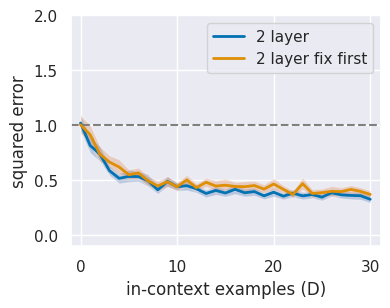}\vspace{-0.1in}
    \caption{ICL prediction performance of training two layers vs training the second layer only.}\label{fig:lazy}\vspace{-0.2in}
\end{wrapfigure}

\begin{theorem}[Two layers with attention mask facilitate ICL]\label{thm:two_layer_lazy}
   Assume there is no $W_{in}$. Assume $W_{KQ,1}$ is a zero matrix, and $W_{V,1}=I_{d+1}$. With the attention mask, the output $f(E_2)$ of the first layer is
        \begin{eqnarray}\label{eqn:first_layer_output}
            O=\begin{bmatrix}
                2x_1 & \frac{1}{2}x_1 & \frac{1}{3}x_1+\frac{4}{3}x_2 & \frac{1}{4}x_1+\frac{1}{4}x_2 & \ldots \\
                0 & \frac{3}{2}y_1 & \frac{1}{3}y_1 & \frac{1}{4}y_1+\frac{5}{4}y_2 & \ldots 
            \end{bmatrix}.
        \end{eqnarray}
        In the second layer, considering a linear attention without mask and taking 
        \begin{eqnarray*}
            \phi\left(E^{\top}(W^K)^{\top}W^QE\right)_{:,2D+1}=\frac{1}{\log(D)}E^{\top}\begin{bmatrix}
            I_d & 0\\0 & 0
        \end{bmatrix}\begin{bmatrix}
            x_q\\0
        \end{bmatrix},\label{eqn:linear}
        \end{eqnarray*}
        we have
        \begin{eqnarray*}
            \widehat{y}_{\text{q}} \approx\frac{x_{\text{q}}^{\top}}{\log(D)}\left(\sum_{i} \left( \frac{1}{i}-\frac{1}{2D} \right) x_ix_i^{\top}\right)\theta +\frac{\epsilon}{\log(D)}\label{eqn:weight}\approx y_{\text{q}}+\frac{\epsilon}{\log(D)},
        \end{eqnarray*}
        with $\epsilon/\log(D)=O_p(\sqrt{d/\log(D)})$,
          i.e., the second layer is able to predict $y_{\text{q}}$. The notation $\epsilon$ represents some cross terms of $x_{\text{q}}^{\top}x_ix_j^{\top}\theta$, which is negligible when $D$ is large enough. 
\end{theorem}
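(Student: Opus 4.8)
The plan is to carry out the two attention layers in sequence and track how the prompt $E_2$ is transformed. First I would compute the first-layer output explicitly. Since $W_{KQ,1}=0$, every entry of the pre-softmax score matrix is $0$ except where the mask $M$ injects $-\infty$; hence on each column $i$ the softmax $\phi$ is simply the uniform distribution over the tokens $1,\dots,i$ allowed by the look-ahead mask, i.e.\ $\phi_{j,i}=\mathbf{1}\{j\le i\}/i$. With $W_{V,1}=I_{d+1}$ and the residual connection, column $i$ of $O$ equals the $i$-th column of $E_2$ plus $\tfrac1i\sum_{j\le i}(E_2)_{:,j}$. Plugging in the $x$'s and $y$'s in the odd/even columns of $E_2$ and doing the bookkeeping (the query column $x_q$ sits in position $2D+1$, contributing $\tfrac{1}{2D+1}$ to every partial average but being negligible to leading order, which is where the $-\tfrac{1}{2D}$ correction originates) yields the claimed matrix \eqref{eqn:first_layer_output}, and more importantly the general form of column $2k-1$ and column $2k$: the top block of each column is a known weighted combination $\sum_{j}c_{kj}x_j$ and the bottom block is $\sum_{j}c_{kj}'y_j$. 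The key structural point is that after this layer, each column now carries an $x$-part and a $y$-part that are ``aligned'' (share the same index set), so the data is effectively structured.

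Second I would run the linear second layer on the last column. By the stated design choice \eqref{eqn:linear}, the attention weight vector on the query column is $\tfrac{1}{\log D}$ times the projection of each token's $x$-block onto $x_q$; so $\widehat y_q$ becomes $W_{out}$ times $\tfrac{1}{\log D}\sum_{\text{tokens }t} (x\text{-block of }O_{:,t})^\top x_q \cdot (y\text{-block of }O_{:,t})$, choosing $W_{out}$ to read off the last coordinate. Substituting the first-layer column formulas, the odd columns contribute $0$ to the $y$-block and the even columns contribute the $y$-combinations; collecting terms and using $y_i=x_i^\top\theta$, the ``diagonal'' pieces $x_q^\top x_i x_i^\top\theta$ assemble into $x_q^\top\big(\sum_i(\tfrac1i-\tfrac{1}{2D})x_ix_i^\top\big)\theta/\log D$, which is \eqref{eqn:weight}, while all genuinely cross terms $x_q^\top x_i x_j^\top\theta$ with $i\ne j$ are lumped into $\epsilon/\log D$.

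Third I would justify the two approximations. For the first ($\widehat y_q\approx$ the displayed quadratic form), I bound $\epsilon$: each cross term has mean zero and, by independence and Gaussianity of the $x_i$'s and $\theta$, variance of order $1/(ij)$ or so; summing over the $O(D^2)$ ordered pairs and taking square roots gives $\epsilon=O_p(\sqrt{d})$ up to logarithmic-in-$D$ factors, hence $\epsilon/\log D=O_p(\sqrt{d/\log D})$ after absorbing the prefactor — this is the stochastic bound stated. For the second ($\approx y_q$), note $\sum_{i=1}^D(\tfrac1i-\tfrac{1}{2D})=\log D+O(1)$, so $\tfrac{1}{\log D}\sum_i(\tfrac1i-\tfrac{1}{2D})x_ix_i^\top$ is a convex-combination-like average of rank-one matrices $x_ix_i^\top$ that concentrates around $\mathbb{E}[x x^\top]=I_d$ as $D\to\infty$; therefore $x_q^\top(\cdots)\theta\to x_q^\top\theta=y_q$, with the fluctuation again $O_p(\sqrt{d/\log D})$, matching the $\epsilon/\log D$ error already exhibited.

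I expect the main obstacle to be the third step — the concentration argument. Unlike a plain i.i.d.\ average, the weights $\tfrac1i$ are highly non-uniform, so the weighted sum $\sum_i w_i x_i x_i^\top$ with $w_i\propto 1/i$ is dominated by the first few terms and does \emph{not} concentrate around $I_d$ in operator norm; what actually concentrates (and all that is needed) is the scalar $x_q^\top(\sum_i w_i x_ix_i^\top)\theta$, whose variance I must compute carefully using $\mathrm{Var}(\sum w_i a_i)=\sum w_i^2\mathrm{Var}(a_i)$ together with $\sum_i w_i^2=\sum_i 1/i^2=O(1)$ — it is precisely this $O(1)$ (not $o(1)$) bound on $\sum w_i^2$ that, after division by $\log D$, produces the $O_p(1/\sqrt{\log D})$ rate rather than a faster one. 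Keeping the query-column self-contribution, the mask bookkeeping, and these weighted-variance estimates consistent is the delicate part; everything else is routine linear-algebra substitution.
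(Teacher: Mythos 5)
Your proposal follows the same route as the paper's proof: the zero score matrix plus the look-ahead mask makes the first-layer attention uniform over the visible tokens, the residual connection gives exactly the matrix in (\ref{eqn:first_layer_output}), the prescribed linear second layer turns $\widehat y_q$ into a weighted sum of products of $y$-entries with $x$-projections onto $x_q$, and the diagonal terms assemble into the quadratic form while the off-diagonal ones are collected into $\epsilon$ and bounded by a mean-zero/second-moment argument. Structurally this is the paper's argument.

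However, two pieces of bookkeeping are off, and one of them is the quantitative heart of the theorem. First, the $-\tfrac{1}{2D}$ does not originate from the query column ``contributing $\tfrac{1}{2D+1}$ to every partial average'' --- the mask prevents the query column from entering any earlier column; it comes from truncating the tail sums over later columns, $\sum_{k\ge 2i}\tfrac{1}{k^2}\approx \tfrac{1}{2i}-\tfrac{1}{2D}$, which together with the $\tfrac{1}{2i}$ from column $2i$ gives the diagonal weight $\approx \tfrac1i-\tfrac{1}{2D}$. Second, your cross-term estimate is internally inconsistent with the rate you claim: if the $(i,j)$ cross term carried weight of order $1/(ij)$, the squared weights would sum to $\Theta((\log D)^2)$, giving $\epsilon=O_p(\sqrt d\,\log D)$ and hence $\epsilon/\log D=O_p(\sqrt d)$, which does not vanish. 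The actual weights decay in $i+j$ (the paper's $w_{ij}\approx \tfrac{2}{2j+i-2}-\tfrac{1}{2D+1}$ type terms), so that $\sum_{i<j}w_{ij}^2=\Theta(\log D)$ and $\mathbb{E}\epsilon^2\lesssim d\log D$, which is exactly what produces $\epsilon/\log D=O_p(\sqrt{d/\log D})$; this computation is asserted rather than established in your write-up. Relatedly, the dominant error is from these cross terms, not from the fluctuation of the diagonal sum $\sum_i w_i\,x_q^\top x_ix_i^\top\theta$, whose standard deviation is only $O(\sqrt d/\log D)$ because $\sum_i w_i^2=O(1)$, so your closing discussion misidentifies the bottleneck. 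A minor difference: the paper also shows the $\tfrac{1}{2D+1}\sum_i x_i$ contamination in the query column of $O$ (its $A_2$ term) is negligible; under the literal theorem statement, which posits the attention vector directly, this step can be skipped, but it is needed if the vector is derived from $(W^K)^\top W^Q$ applied to the first-layer output.
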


The proof of Theorem \ref{thm:two_layer_lazy} can be found in Section \ref{sec:appendix:formula} in Appendix.


\color{black}

To further justify Theorem \ref{thm:two_layer_lazy}, we also conduct the corresponding simulations. We fix the first layer $W_{KQ,1}$ to be zero, and train the other parameters in the transformer. From Figure \ref{fig:lazy}, there is only a slight performance difference regarding whether $W_{KQ,1}$ is fixed or not. It implies that the training of the first layer focuses on mixing all columns of $E_2$ together, and as long as the columns of $E_2$ are mixed in an asymmetric way among different $i$s, one can perform ICL in the second layer.


\section{Positional Encoding}\label{sec:pos}

This section studies the effect of the completely trainable PE. In short, as in Figure \ref{fig:all}, given infinite training prompts, the ICL performance is improved by PE.

\subsection{Intuition: Why Positional Encoding Helps}\label{sec:pos_help}
\color{black}

To explain why PE improves the ICL performance, we draw the heat map of the attention scores of the two attention layers without/with PE (Figure \ref{fig:attn_no_pos} and Figure \ref{fig:attn_pos}). 

\begin{figure}[h]
\centering
\begin{minipage}{0.42\textwidth}
\centering
\includegraphics[scale=0.29]{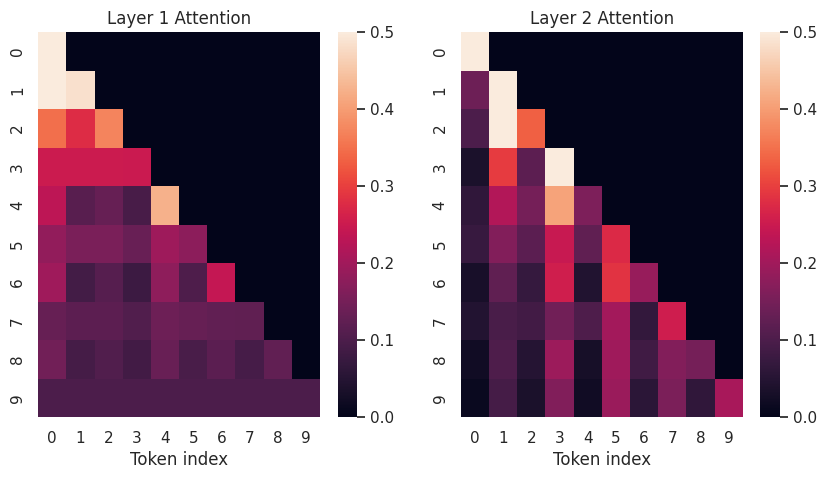}
        \caption{Attention scores of the first 5 input pairs on a single head, two layers, with mask, $E_2$ format. One prompt. Each row is the attention of one token. No PE.} \label{fig:attn_no_pos}
\end{minipage}\hspace{0.3in}
\begin{minipage}{0.42\textwidth}
\centering
\includegraphics[scale=0.29]{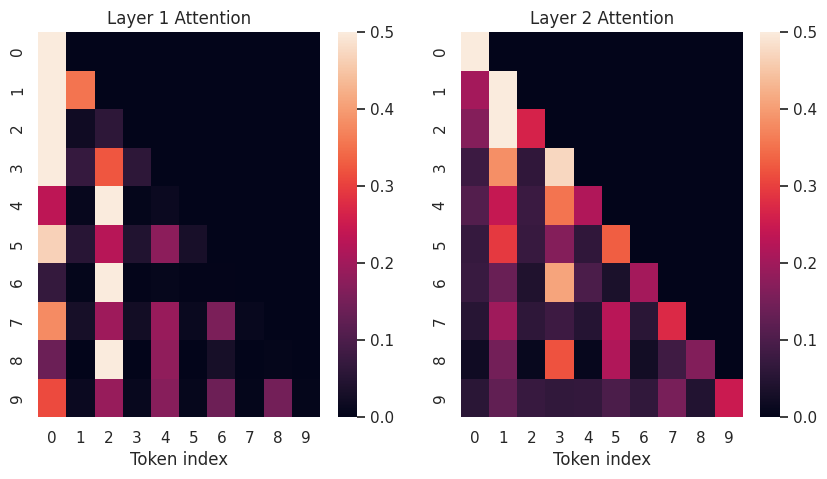}
        \caption{Attention scores of the first 5 input pairs on a single head, two layers, with mask, $E_2$ format. One prompt. Each row is the attention of one token. With PE.} \label{fig:attn_pos}
\end{minipage}

\end{figure}

For the sake of the readability of the figures, we only plot attention scores for the first 5 pairs (i.e., 10 columns) in Figure \ref{fig:attn_no_pos} and Figure \ref{fig:attn_pos}. The full plots can be found in Figures \ref{fig:attn_no_pos_full} and \ref{fig:attn_pos_full} in the appendix.

There are two observations from Figure \ref{fig:attn_no_pos} and Figure \ref{fig:attn_pos}. First, Figure \ref{fig:attn_no_pos} verifies the analysis in Section \ref{sec:two_layer}. In the attention score matrix of the first layer (left panel of Figure \ref{fig:attn_no_pos}), the attention scores are almost the same along each row. On the other hand, in the second layer, in each row, the high scores often appear on the odd positions (position index starts from 0 in the figures), which contains information of $y_i$s. These observations suggest that the first layer aggregates different columns, and the second layer learns from the examples and makes an ICL prediction. 

Second, in Figure \ref{fig:attn_pos}, when adding PE, the attention scores in the first layer are almost zero for the odd positions in each row and apply a high weight for $x_i$s. Denote the $P$ matrix as
    $$P=\begin{bmatrix}
        p_{11} & p_{12} & \ldots & p_{D2} & p_{q1}
    \end{bmatrix}=\begin{bmatrix}
        p_{11}^x & p_{12}^x & p_{21}^x & p_{22}^x & p_{31}^x & \ldots & p_{D1}^x & p_{D2}^x & p_{q1}^x\\
        p_{11}^y & p_{12}^y & p_{21}^y & p_{22}^y & p_{31}^y & \ldots & p_{D1}^y & p_{D2}^y & p_{q1}^y
    \end{bmatrix},$$
where $p_{ij}^x\in\mathbb{R}^{d}$ and $p_{ij}^y\in\mathbb{R}$ when $P\in \mathbb{R}^{(d+1)\times D}$ (no $W_{in}$).  
Recall that in the formulation, assuming $W_{V,1}=I$ and no $W_{in}$, the {(2$i$th column of) the }output of the first attention layer becomes
\begin{eqnarray*}
    f(E_2)=\begin{bmatrix}
        \ldots & \sum_{j\leq i} \phi_{2j-1,2i}(x_j+p_{j1}^x)+\phi_{2j,2i}p_{j2}^x +p_{i2}^x& \ldots\\
        \ldots & \sum_{j\leq i}\phi_{2j-1,2i}p_{j1}^y+\phi_{2j,2i}(y_j+p_{j2}^y)  + y_{i}+p_{i2}^y & \ldots
    \end{bmatrix}.
\end{eqnarray*}
Introducing PE helps the attention score matrix focus more on matching $x_i$ and $y_i$, instead of naively taking an average to obtain a mixture of different $(x_j,y_j)$s. {As illustrated by the previous toy example and will be theoretically investigated later}, with PE, for each $(0,y_i)$ column, the corresponding attention score of $(x_i,0)$ ($\phi_{2i-i,2i}$) and $(0,y_i)$ ($\phi_{2i,2i}$), will be {large} (around 1/2), while the other attention scores ($\phi_{2j-1,2i}$ and $\phi_{2j,2i}$ for $j<i$) will be almost zero. Therefore, $2i$th column of $f(E_2)$ reduces to a weighted average of $(x_i,0)$ and $(0,y_i)$ {(plus some position encoding constants)}.

\subsection{Positional Encoding without Input Embedding}\label{sec:benefit}
\color{black}

When studying the expressive power of PE, since all features in $x$ are symmetric, we employ the following assumption in the transformer parameters to simplify the analysis:
\begin{assumption}\label{assumption:nn}
    In the first layer, we assume $W_{V,1}=v I_{d+1}$, and $$W_{KQ,1}=\begin{bmatrix}
    v_xI_d & 0\\
    0 & 0
\end{bmatrix}.$$
\end{assumption}
While Theorem \ref{thm:two_layer_lazy} considers a scenario where $W_{KQ,1}$ is a zero matrix, Assumption \ref{assumption:nn} allows us to adjust the attention score based on $x_i$s and PE. This assumption also aligns with the format of the optimal solution in \cite{zhang2023trained,cui2024superiority,huang2023context}.

To quantify the benefit of positional encoding, while \cite{garg2022can} considers a completely trainable PE, to simplify the theoretical analysis, we construct the following PE with a trainable scalar $c$:
\begin{eqnarray}\label{eqn:new_pe}
c\begin{bmatrix}
        1 & 1    & 0 & \ldots & 0\\
        0 & 0&  1 &  1  \ldots & 0\\
        \ldots
    \end{bmatrix}.
\end{eqnarray}
In the above construction, the PE is the same for the two columns of the same example, and is perpendicular to the other examples. 

\begin{wraptable}{r}{0.5\linewidth}
        \centering
    \begin{tabular}{c|c}
        PE method & MSE \\\hline
        New PE in (\ref{eqn:new_pe}) & 0.1992 \\
        No PE & 0.4849 \\
        Completely trainable PE & 0.1877
    \end{tabular}
    \caption{MSE using different PE methods. }
    \label{tab:new_pe}
\end{wraptable}

To validate the effectiveness of the above PE, we conduct a small simulation. We use the implementation of \cite{garg2022can} in the simulation, the transformer has 2 layers, 1 attention head, and is trained for 500k iterations. In each prompt, $d=5$ and $D=30$. To demonstrate the clear difference among the no PE case, new PE, and the completely trainable PE, we take $p=64$. One can see that the overall MSE of using (\ref{eqn:new_pe}) is close to the completely trainable PE, and can effectively improve the performance compared to the no PE case.\color{black}

Using the above PE construction as in (\ref{eqn:new_pe}), the following theorem demonstrates that the first attention layer with PE successfully matches $x_i$s with their corresponding $y_i$s. We consider the scenario without the input embedding matrix $W_{in}$ in this section and consider $W_{in}$ in the next section.
 \begin{theorem}[First layer without embedding matrix]\label{thm:icl_no_emb}
    When $D<d$, using the PE construction in (\ref{eqn:new_pe}), taking proper $c$ and $v_x$ such that $v_x=\Theta(\sqrt{\log(d)/d})$ and $c=c_0\sqrt{d\log(d)}$ for some large constant $c_0$, with $d\rightarrow\infty$, with probability tending to 1 over the randomness of the examples $(x_i,y_i)$s, the output of the first attention layer combines $x_i$ and $y_i$ well uniformly, i.e.,
    \begin{eqnarray*}
        O_{:,2i} = \frac{v}{2}\begin{bmatrix}
    x_i+p_{i1}^x+p_{i2}^x\\y_i
    \end{bmatrix}+\begin{bmatrix}
            p_{i2}^x\\
            y_i
        \end{bmatrix}+o_p.
    \end{eqnarray*}
    In addition,
    \begin{eqnarray*}
        O_{:,2i-1}=(v+1)\begin{bmatrix}
            x_i+p_{i1}^x\\0
        \end{bmatrix}+o_p.
    \end{eqnarray*}
The notation $o_p$ represents a negligible term with probability tending to 1 uniformly for all $i$s. The index number starts from 1.
\end{theorem}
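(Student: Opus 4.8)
The plan is to track the two columns $O_{:,2i-1}$ and $O_{:,2i}$ separately through the explicit first-layer formula
\[
O = W_{V,1}(E_2+P)\,\phi\!\left((E_2+P)^{\top}W_{KQ,1}(E_2+P)+M\right)+(E_2+P),
\]
specialized to Assumption~\ref{assumption:nn} (so $W_{V,1}=vI_{d+1}$, and $W_{KQ,1}$ only keeps the top-left $v_xI_d$ block) and to the PE of \eqref{eqn:new_pe}. The key point is that under the look-ahead mask, column $2i$ attends only to columns $1,\dots,2i$, and column $2i-1$ attends only to columns $1,\dots,2i-1$. First I would compute the pre-softmax logits: for the target column $2i$ (which carries the embedding $(0,y_i)+(p_{i2}^x,p_{i2}^y)$ with $p_{i2}^x=ce_{?}$... more precisely, the $x$-part of the PE for the second column of example $i$ equals $c$ times the $i$-th standard basis vector in $\mathbb{R}^d$), the logit against column $2j-1$ (embedding $(x_j,0)+(p_{j1}^x,0)$ with $p_{j1}^x$ also $= c\,e_i$ for $j=i$ and $=c\,e_j$ for $j\ne i$) is, after discarding the $y$-rows killed by $W_{KQ,1}$, a bilinear form $v_x\,(p_{i2}^x)^{\top}(x_j+p_{j1}^x)$; similarly for column $2j$. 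The PE blocks for distinct examples being mutually orthogonal ($\langle ce_i, ce_j\rangle=0$ for $i\ne j$, valid since $D<d$), the diagonal term $(p_{i2}^x)^{\top}p_{i1}^x = c^2$ dominates, of order $v_x c^2 = \Theta(\sqrt{\log d/d})\cdot d\log d = \Theta(\sqrt{d}\,(\log d)^{3/2})$, while every off-diagonal logit is $O(v_x c\sqrt{d}) = O(\sqrt{\log d}\cdot\sqrt{d\log d}) = O(\sqrt{d}\log d)$ using standard Gaussian tail bounds on $\|x_j\|$ and on the cross terms $\langle e_i, x_j\rangle$, uniformly over the $O(D^2)=O(d^2)$ pairs via a union bound. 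Hence the softmax over columns $\{1,\dots,2i\}$ concentrates on the two ``diagonal'' entries $\phi_{2i-1,2i}$ and $\phi_{2i,2i}$; I would next argue these two are each $\approx 1/2$ — either by noting that columns $2i-1$ and $2i$ carry the \emph{same} $x$-part PE $ce_i$ so their logits against the query column $2i$ differ only by the $o_p$-level term $v_x(p_{i2}^x)^{\top}x_i = v_x c\langle e_i,x_i\rangle = \Theta(\sqrt{\log d/d})\cdot\sqrt{d\log d} = \Theta(\log d)$ versus the $\Theta(\sqrt d(\log d)^{3/2})$ gap, which is lower order, so the ratio of the two surviving softmax weights tends to $1$.

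Plugging the concentrated attention vector back, $O_{:,2i}$ equals $v\cdot\tfrac12\big[(x_i+p_{i1}^x+\epsilon_1) + (p_{i2}^x \text{ (the }x\text{-part of }(0,y_i)\text{'s PE)}, y_i+\epsilon_2)\big]$ summed coordinate-wise, plus the residual stream term $(0,y_i)+(p_{i2}^x,p_{i2}^y)$; collecting the $x$-rows gives $\tfrac v2(x_i+p_{i1}^x+p_{i2}^x)+p_{i2}^x+o_p$ and the $y$-row gives $\tfrac v2 y_i + y_i + o_p$ (the $p^y$ entries of \eqref{eqn:new_pe} are zero), which is exactly the claimed expression for $O_{:,2i}$. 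For $O_{:,2i-1}$ the analysis is the same but \emph{simpler}: column $2i-1$ is an $x$-type column with PE $(ce_i,0)$, and its self-logit against itself is again $v_x c^2$-dominant while all logits against earlier columns (both $x$- and $y$-type) are lower order by the same union bound — here there is essentially a single dominant entry, so $\phi_{\cdot,2i-1}$ concentrates on the self-index, yielding $O_{:,2i-1}=v(x_i+p_{i1}^x)+(x_i+p_{i1}^x)+o_p=(v+1)(x_i+p_{i1}^x)+o_p$.

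The main obstacle is making the concentration \emph{uniform in $i$} with the right error order. This requires: (i) controlling $\max_{j}\|x_j\|^2 = d(1+o(1))$ and $\max_{i\ne j}|\langle e_i,x_j\rangle|=O(\sqrt{\log d})$ simultaneously, which is a routine Gaussian union bound over $D\le d$ vectors but must be stated carefully so the failure probability is $o(1)$; (ii) verifying that the \emph{sum} of the $O(D)$ off-diagonal softmax weights in each column is still $o_p(1)$ after the union bound — since each is exponentially small, $e^{-\Omega(v_x c^2 - v_x c\sqrt d)}\le e^{-\Omega(\sqrt d(\log d)^{3/2})}$, this survives multiplication by $D\le d$; and (iii) propagating the $o_p$ through the matrix product $W_{V,1}(E_2+P)\phi$, where one must check that the residual attention mass, even when multiplied by the largest column norm $\max_j\|(x_j+p_{j1}^x)\| = O(c) = O(\sqrt{d\log d})$, contributes only $o_p$ relative to the $\Theta(c)=\Theta(\sqrt{d\log d})$ scale of the leading terms — which is where the precise choice $c=c_0\sqrt{d\log d}$ with $c_0$ a large constant is used. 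Once these uniform bounds are in hand, the two displayed identities follow by bookkeeping the surviving terms, and I would present the argument by fixing one $i$, doing the estimate, and then invoking the union bound to make it simultaneous.
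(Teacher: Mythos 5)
Your overall route is the same as the paper's: write out the pre-softmax logits under Assumption \ref{assumption:nn} and the block-orthogonal PE (\ref{eqn:new_pe}), use Gaussian/Chernoff tail bounds plus a union bound over the $O(d^2)$ pairs to show the $v_xc^2$ ``diagonal'' logits dominate the sum of all other exponentials in each masked column, and then bookkeep the value and residual terms; the paper's bounds $P(x_i^{\top}x_j>z)\le\exp(\tfrac d2\log 2-\tfrac z2)$ and $P(cx_{i1}>z)\le\exp(\tfrac{c^2}8-\tfrac z2)$ are exactly your steps (i)--(ii), and your final bookkeeping for $O_{:,2i-1}$ and $O_{:,2i}$ matches the paper's.

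There is, however, a genuine gap at the decisive step, the claim that the two surviving attention weights in column $2i$ are each $\approx 1/2$. Your argument is: the two logits differ only by $v_x(p_{i2}^x)^{\top}x_i=v_xc\,\langle e_i,x_i\rangle=\Theta(\log d)$, which is lower order than the $\Theta(\sqrt d(\log d)^{3/2})$ ``gap'', hence the ratio of the two softmax weights tends to $1$. That inference is a non sequitur. Being small relative to the common dominant part $v_xc^2$ is only relevant for beating the \emph{off-diagonal} terms; the ratio between the two diagonal weights is exactly $\exp\bigl(v_xc\,\langle e_i,x_i\rangle\bigr)$, and for this to tend to $1$ you need the logit \emph{difference itself} to vanish (uniformly in $i$, i.e.\ $v_xc\sqrt{\log d}\to0$), not merely to be of lower order than $v_xc^2$. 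By your own computation $v_xc=\Theta(\log d)$ while $\langle e_i,x_i\rangle$ is a nondegenerate $N(0,1)$ variable, so the ratio behaves like $d^{\,c_0\langle e_i,x_i\rangle}$, which diverges or vanishes rather than converging to $1$; as reasoned, the attention in column $2i$ would concentrate on whichever of the keys $2i-1$, $2i$ has the larger logit instead of splitting evenly, and the displayed formula for $O_{:,2i}$ would not follow. For completeness: the paper's own proof makes essentially the same leap (it asserts $\exp(v_x\|x_i\|^2+v_xx_i^{\top}p_{i2}^x+v_xc^2)=\exp(v_xc^2)(1+o(1))$ without justification), so you have reproduced the paper's argument including its weakest point; but the ``lower order than the gap'' reasoning you supply is not a valid justification of the $1/2$--$1/2$ split, and this is the one step that would need a genuinely different treatment (e.g.\ a scaling with $v_xc\to0$, or an argument that tolerates unequal weights).
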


Theorem \ref{thm:icl_no_emb} shows that when designing $P$ properly, the attention mechanism helps match $x_i$s and their corresponding $y_i$s in the first layer. To prove Theorem \ref{thm:icl_no_emb}, the key is to use probability bounds to quantify the behavior of the attention scores. The details are in Section \ref{sec:appendix:proof} in Appendix.

While the above theorem shows how $x_i$s and $y_i$s are matched in the $2i$th column in the first layer, there are two issues. First, the asymptotic of $d\rightarrow\infty$, i.e., infinite number of features, is impractical. Second, the constraint on $D<d$ also limits the ICL performance. 

If we do not apply input embedding $W_{in}$ and insist on increasing $D$, when $D\gg d$, the any PE does not have enough flexibility to match $x_i$s and $y_i$s. Since $D$ is much larger than $p$, some columns of $P$ should be in a similar direction. As a result, PE cannot perfectly match $x_i$ and $y_i$ in such a direction. The detailed mathematical description can be found as follows:

\begin{theorem}[PE is not flexible when $D\gg p=d+1$]\label{prop:large_D}
  The output of the first layer satisfies
   \begin{eqnarray*}
    O_{:,2i}&=&E\phi\left( (E+P)^{\top}W_{KQ,1}(E+P)+M  \right)_{:,2i}+\begin{bmatrix}
        p_{i2}^x\\y_i
    \end{bmatrix}\\
    &=& \frac{1}{B_i}\sum_{j\leq i}  \begin{bmatrix}
        x_j+p_{j1}^x\\
        0
    \end{bmatrix} \exp\left( v_x ( x_j+p_{j1}^x )^{\top}p_{i2}^x\right)+\frac{1}{B_i}\sum_{j<i} \begin{bmatrix}
        p_{j2}^x\\
        y_j
    \end{bmatrix}\exp(v_x(p_{j2}^x)^{\top}p_{i2}^x)\\
    &&+\underbrace{\frac{1}{B_i}\begin{bmatrix}
        p_{i2}^x\\y_i
    \end{bmatrix}\exp(v_x\|p_{i2}\|^2)}_{\text{We want to match $x_i$ with this term.}}
\end{eqnarray*}
where$$B_i=\sum_{j<i}\left[\exp(v_x x_j^{\top}p_{i2}^x)+\exp(v_x (p_{j2}^x)^{\top}p_{i1}^x )\right]+\exp(v_xx_i^{\top}p_{i2}^x+v_x c^2)+\exp(v_x c^2).$$
 When $D$ is large enough, for large $i$, under some general assumptions on PE,
 \begin{eqnarray}\label{eqn:large_D}
     \underbrace{\left\|\sum_{j< i}  \begin{bmatrix}
        x_j+p_{j1}^x\\
        0
    \end{bmatrix} \exp\left( v_x ( x_j+p_{j1}^x )^{\top}p_{i2}^x\right)\right\|_2}_{\text{Terms that are not related to $x_i$.}}\gg\underbrace{\left\| \begin{bmatrix}
        x_i+p_{i1}^x\\0
    \end{bmatrix}\exp\left( v_x(x_i+p_{i1}^x)^{\top}p_{i2}^x \right) \right\|_2}_{\text{We want to match this term with $y_i$.}}.
 \end{eqnarray}
 
\end{theorem}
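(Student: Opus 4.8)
The plan is to make precise the claim that when $D$ is large, no choice of positional encoding can keep the single ``matching'' term $\big(x_i+p_{i1}^x,\,0\big)\exp\!\big(v_x(x_i+p_{i1}^x)^\top p_{i2}^x\big)$ dominant over the accumulated non-matching mass $\sum_{j<i}\big(x_j+p_{j1}^x,\,0\big)\exp\!\big(v_x(x_j+p_{j1}^x)^\top p_{i2}^x\big)$. First I would set up the pigeonhole structure: since $P\in\mathbb{R}^{p\times(2D+1)}$ has only $p=d+1$ rows, the $D$ vectors $p_{i2}^x\in\mathbb{R}^d$ live in a $d$-dimensional space, so by a covering/volume argument, once $D\gg d$ there exist indices $j<i$ (in fact a positive fraction of them, for large $i$) whose $p_{j1}^x$ direction has non-trivial inner product with $p_{i2}^x$ — concretely, for the largest indices $i$ one cannot have all earlier $p_{j1}^x$ nearly orthogonal to $p_{i2}^x$ unless their norms shrink, and the ``general assumptions on PE'' (uniform boundedness and non-degeneracy of the $\|p_{ij}\|$, which I would state explicitly) prevent that. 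This yields a lower bound: a constant fraction of the summands in the non-matching sum have exponent at least $v_x\cdot\kappa\|p_{i2}^x\|$ for some $\kappa>0$, hence the sum's norm is $\Omega\!\big(i\big)$ up to the same exponential scale that appears in the matching term.

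Next I would bound the matching term from above: $\exp\!\big(v_x(x_i+p_{i1}^x)^\top p_{i2}^x\big)$ is, on the high-probability event, at most $\exp\!\big(v_x(\|p_{i2}^x\|^2 + C\sqrt{\log D}\,\|p_{i2}^x\|)\big)$ using the Gaussian tail bound $|x_i^\top p_{i2}^x|\lesssim \sqrt{\log D}\,\|p_{i2}^x\|$ uniformly over $i\le D$ (union bound over $D$ coordinates, exactly the device used in the proof of Theorem~\ref{thm:icl_no_emb}). The point is that this is only a fixed exponential factor — it does not grow with $i$ — whereas the non-matching sum carries an additional multiplicative factor linear in $i$ coming from the number of terms. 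So as soon as $i$ (and hence $D$) exceeds roughly $\exp\!\big(v_x\,\mathrm{poly}(\|p\|,\sqrt{\log D})\big)$, the left side of \eqref{eqn:large_D} swamps the right side. I would then carefully track the two regimes — either $v_x\|p_{i2}^x\|$ is small, in which case all exponentials are $\Theta(1)$ and the sum is trivially $\Theta(i)$ times larger, or $v_x\|p_{i2}^x\|$ is large, in which case the pigeonhole argument of the previous paragraph supplies enough near-aligned terms that their exponentials are within a bounded ratio of the matching exponential — and show the conclusion holds in both.

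The main obstacle, and where the ``general assumptions on PE'' have to be pinned down, is the geometric pigeonhole step: I need to rule out the adversarial possibility that $P$ arranges $p_{i2}^x$ to be simultaneously nearly orthogonal to \emph{all} of $p_{11}^x,\dots,p_{i-1,1}^x$ while those vectors retain non-vanishing norm. In $\mathbb{R}^d$ this is impossible once $i-1>d$ for an exactly-orthogonal system, but ``nearly orthogonal'' systems can be larger (Johnson–Lindenstrauss-type packings give exponentially many $\varepsilon$-orthogonal vectors), so the honest statement is that $D$ must be taken large relative to $\exp(c\,d)$, or alternatively one restricts to a mild regularity condition on $P$ (e.g. the Gram matrix of the $p_{i2}^x$ has bounded condition number on blocks, or the $p_{ij}$ are drawn from / close to a fixed family). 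I would state the cleanest such assumption, prove the fraction-of-aligned-terms lemma under it via a second-moment/trace computation on $\sum_{j<i}\big((p_{j1}^x)^\top p_{i2}^x\big)^2 = \|P_{<i}\,p_{i2}^x\|^2$ and lower-bound this by $\lambda_{\min}$-type quantities, and then assemble the three ingredients (pigeonhole lower bound on the bad sum, Gaussian-tail upper bound on the good term, case split on $v_x\|p_{i2}^x\|$) into \eqref{eqn:large_D}. Everything else — the explicit form of $O_{:,2i}$ and $B_i$ — is a direct expansion of the softmax attention output under Assumption~\ref{assumption:nn} with the mask, identical in spirit to the computation already carried out for Theorem~\ref{thm:icl_no_emb}, so I would relegate it to a short verification.
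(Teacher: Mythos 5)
There is a genuine gap, and it sits exactly where you placed your main lemma. Your plan hinges on a geometric pigeonhole claim: that once $D\gg d$, a positive fraction of the earlier vectors $p_{j1}^x$ must have non-trivial \emph{positive} inner product with $p_{i2}^x$. This is not forced by dimension counting: all of $p_{11}^x,\ldots,p_{i-1,1}^x$ can lie in the $(d-1)$-dimensional hyperplane orthogonal to $p_{i2}^x$ (or have large \emph{negative} inner products) with non-shrinking norms, no matter how large $D$ is — the pigeonhole pressure among the $p_{i2}^x$'s themselves does not transfer to alignment between the $p_{j1}^x$'s and $p_{i2}^x$. Your proposed repair via the second moment $\sum_{j<i}\bigl((p_{j1}^x)^{\top}p_{i2}^x\bigr)^2=\|P_{<i}\,p_{i2}^x\|^2$ does not help either, because it is sign-blind: terms with negative inner product enter the softmax with exponentially \emph{suppressed} weights and contribute nothing to the left side of (\ref{eqn:large_D}). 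Moreover, even granting a constant fraction of indices with $(p_{j1}^x)^{\top}p_{i2}^x\geq\kappa\|p_{i2}^x\|$, that exponent is linear in $\|p_{i2}^x\|$ while the matching exponent (with $p_{i1}^x$ aligned to $p_{i2}^x$, as in the designed PE) is $v_x\|p_{i2}^x\|^2$; the ratio of exponentials is then $e^{-v_x(\|p_{i2}^x\|^2-\kappa\|p_{i2}^x\|)}$, not bounded, so in the large-$v_x\|p_{i2}^x\|$ regime your ``bounded ratio'' step fails unless $D$ is taken exponentially large in $v_x\|p_{i2}^x\|^2$ — which is not what your case split asserts. A smaller issue: in the small-exponent regime the sum of the mean-zero $x_j$'s cancels, so the left side is typically of order $\sqrt{di}$, not ``$\Theta(i)$ times larger''; the conclusion survives, but not by the reason you give.

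The ingredient you are missing — and the one the paper actually uses — is the exponential tilting of the Gaussian weights rather than any geometry of the PE directions. The paper splits the non-matching sum into its conditional expectation plus fluctuation: $\mathbb{E}\bigl[(x_j+p_{j1}^x)\exp(v_x(x_j+p_{j1}^x)^{\top}p_{i2}^x)\bigr]=(v_xp_{i2}^x+p_{j1}^x)\exp\bigl(\tfrac{v_x^2}{2}\|p_{i2}^x\|^2+v_x(p_{j1}^x)^{\top}p_{i2}^x\bigr)$, so every one of the $i-1$ non-matching terms automatically gains the exponent boost $\tfrac{v_x^2}{2}\|p_{i2}^x\|^2$ and a common, non-cancelling component along $p_{i2}^x$, independent of how the $p_{j1}^x$'s point; the residual fluctuation is $O_p(\sqrt{di})$. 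The case analysis is then on $\|p_{jk}^x\|$ and $v_x$ (with $\|p_{jk}^x\|\le\sqrt d$ equal across $j,k$ and $i=\Theta(D)$): if $\|p_{jk}^x\|=o(1)$ the $O_p(\sqrt{di})$ fluctuation dominates the single matching term; if $\|p_{jk}^x\|\gg1$ and $v_x\gg1$ the tilted mean dominates once $D\gg d^2$; if $\|p_{jk}^x\|\gg1$ and $v_x=o(1)$ one needs $D\gg\exp(d)$ and the fluctuation again dominates. Your Gaussian-tail upper bound on the matching term and your instinct to case-split are compatible with this, but without the tilting computation (or some equally strong substitute) the central large-$v_x$ case cannot be closed under the theorem's actual assumptions, so the proposal as written does not establish (\ref{eqn:large_D}).
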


Based on (\ref{eqn:large_D}) in Theorem \ref{prop:large_D}, when matching $x_i$ and $y_i$ in the first layer, $O_{:,2i}$ not only contains information about $x_i$, but also the other columns for other $x_j$s with $j<i$. This implies that for large $D$, $x_i$ and $y_i$ cannot be perfectly matched in the first layer. The proof and the condition in PE can be found in Appendix \ref{sec:appendix:proof}.

\subsection{Benefit of Positional Encoding with Large Input Embedding Dimension}\label{sec:benefit_pe}
\color{black}

When introducing a large embedding dimension, a large $p$ improves the flexibility of the PE design, allowing us to have more in-context examples.

\begin{theorem}[First layer with embedding matrix]\label{prop:icl_emb}
    Under some conditions of $W_{in}$ and $W_{KQ,1}$, when $D<p$, and $p\gg d$, using the PE in (\ref{eqn:new_pe}), taking $v_x=\Theta(\sqrt{\log p/p})$, $c=c_0\sqrt{p\log p}$ for some large enough constant $c_0$, the output of the first attention layer combines $x_i$ and $y_i$ well, i.e., for $i\rightarrow\infty$,
\begin{eqnarray*}
    O_{:,2i-1}=(v+1)\left(W_{in}\begin{bmatrix}
        x_i\\
        0
    \end{bmatrix}+p_{i1}\right)+o_p.
\end{eqnarray*}
In terms of $O_{:,2i}$, similarly,
\begin{eqnarray*}
    O_{:,2i}=\frac{v}{2}\left(W_{in}\begin{bmatrix}
        x_i\\y_i
    \end{bmatrix} +p_{i1}+p_{i2}\right)+\left(W_{in}\begin{bmatrix}
        0\\y_i
    \end{bmatrix} +p_{i2}\right)+o_p.
\end{eqnarray*}
\end{theorem}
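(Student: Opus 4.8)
\textbf{Proof proposal for Theorem \ref{prop:icl_emb}.}

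The plan is to reduce the embedded problem to the no-embedding setting of Theorem \ref{thm:icl_no_emb} by choosing $W_{in}$ and $W_{KQ,1}$ so that, on the relevant subspace, the bilinear form $(W_{in}E + P)^{\top} W_{KQ,1} (W_{in}E + P)$ behaves exactly like the quantity $(E+P)^{\top}W_{KQ,1}(E+P)$ analyzed before, but now with the $x_i$'s living in a $d$-dimensional slice of a $p$-dimensional ambient space. Concretely, I would take $W_{in}$ to be (a scaled) partial isometry embedding $\mathbb{R}^{d+1}$ into $\mathbb{R}^{p}$ isometrically on the $x$-coordinates, and take $W_{KQ,1}$ of the block form $v_x \,\Pi_x$ where $\Pi_x$ is the orthogonal projector onto the image of the $x$-block. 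The PE matrix in (\ref{eqn:new_pe}) must then be interpreted in the $p$-dimensional space: the per-example vectors $p_{i1},p_{i2}$ are equal within an example, have squared norm $c^2 = c_0^2 p \log p$, and are mutually orthogonal across examples. The first bookkeeping step is to write out $O_{:,2i-1}$ and $O_{:,2i}$ as in Theorem \ref{prop:large_D}, i.e.\ as a normalized sum over $j \le i$ of the embedded columns weighted by $\exp\big(v_x \langle \cdot,\cdot\rangle\big)$ exponentials of the inner products $(W_{in}[x_j;0]+p_{j1})^{\top}(W_{in}[x_i;0]+p_{i2})$ etc.

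The heart of the argument is then the same concentration/separation estimate as in Theorem \ref{thm:icl_no_emb}, carried out with $p$ in the role that $d$ played there. I would establish: (i) the diagonal exponent $v_x\|p_{i2}\|^2 = v_x c^2 = \Theta(c_0^2 \sqrt{p}\,(\log p)^{3/2})$ (using $v_x=\Theta(\sqrt{\log p/p})$) dominates; (ii) every off-diagonal exponent involving a distinct example $j \ne i$ is, with probability $\to 1$ uniformly over the at most $D < p$ pairs, smaller than the diagonal one by a polynomial-in-$p$ factor --- here the cross terms split into a ``pure PE'' part $v_x (p_{j2})^{\top}p_{i1} = 0$ by orthogonality, and a ``random $x$'' part $v_x x_j^{\top}(\text{embedded } x_i + p_{i2})$, which is a mean-zero Gaussian of standard deviation $O(v_x \sqrt{p}\,)$ (since $\|W_{in}[x_i;0]\| = O(\sqrt{d}) = o(\sqrt p)$ and $\|p_{i2}\| = \Theta(\sqrt{p\log p})$), giving sub-Gaussian tail bounds that survive a union bound over $O(D^2)=O(p^2)$ pairs once $c_0$ is large enough; (iii) the within-example off-diagonal term (between column $2i{-}1$ and column $2i$ of the same example) contributes the $\exp(v_x x_i^{\top}p_{i2}+v_x c^2)$ mass, which after normalization yields precisely the factor $\tfrac12$ in front of $W_{in}[x_i;y_i]$ and the factor $(v+1)$ for the $2i{-}1$ column, exactly as in the no-embedding case. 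Summing (i)--(iii), the normalizer $B_i$ concentrates on the two diagonal-dominated terms, the off-diagonal contributions are $o_p$ uniformly in $i$, and the stated forms of $O_{:,2i-1}$ and $O_{:,2i}$ follow after adding back the residual stream term $W_{in}[\,\cdot\,] + p_{i\cdot}$.

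I expect the main obstacle to be item (ii), specifically making the separation estimate \emph{uniform over all $i$} (equivalently handling the $i\to\infty$, large-$D$ regime up to $D<p$): the number of cross terms grows with $D$, so the union bound needs the tail probability per term to beat $1/D^2 \asymp 1/p^2$, which is exactly why $v_x c^2$ must outpace $v_x\sqrt{p}\cdot\sqrt{\log p}$ by the factor $c_0$ and why $p \gg d$ is needed (so the embedded-$x_i$ contamination $\|W_{in}[x_i;0]\| = O(\sqrt d)$ is genuinely lower order than the PE scale $\sqrt{p\log p}$). A secondary technical point is verifying that the ``conditions on $W_{in}$ and $W_{KQ,1}$'' referenced in the statement --- essentially that $W_{in}$ acts as a scaled isometry on the $x$-block and is compatible with the projector in $W_{KQ,1}$ --- are exactly what is needed to kill the unwanted $W_{in}$-mixing cross terms; once that alignment is in place, the remainder is a routine transcription of the Theorem \ref{thm:icl_no_emb} proof with $d \mapsto p$. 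The control of the $y$-coordinate (a scalar) is immediate since $y_j = x_j^{\top}\theta$ and $\|\theta\| = O(1)$, so it never affects the dominant exponents.
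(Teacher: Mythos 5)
Your strategy is the right one and matches the paper's in spirit: choose $W_{in}$ and $W_{KQ,1}$ so that the bilinear form seen by the softmax reduces to a $d$-dimensional $x$--$x$ part plus a PE part whose diagonal exponent $v_xc^2$ dominates every cross term, then read off the weights $\approx 1$ on column $2i-1$ for the odd query and $\approx \tfrac12,\tfrac12$ on columns $2i-1,2i$ for the even query, which gives the $(v+1)$ and $\tfrac v2$ factors after adding the residual stream. Where you differ is in execution: you transcribe the Chernoff-bound/union-bound argument of Theorem \ref{thm:icl_no_emb} with $d\mapsto p$, taking $W_{in}$ a scaled partial isometry and $W_{KQ,1}=v_x\Pi_x$, whereas the paper instead assumes $W_{in}^{\top}W_{KQ,1}W_{in}=\mathrm{diag}(v_xI_d,0)$ with all singular values of $W_{in}$ equal to $\kappa\sqrt{p/d}$, pulls the PE back through $P=\sqrt{v_x}\,W_{KQ,1}^{-1/2}P_0$, and then proves concentration of the weighted numerator $\xi_1$ by computing its mean and covariance exactly via Gaussian moment identities (terms like $\mathbb{E}\,Wx\,x^{\top}W^{\top}\exp(2x^{\top}W^{\top}z)$), combined with $1/A_i-1/\mathbb{E}A_i=O_p(1/\sqrt i)$ for the normalizer. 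Your route buys a more transparent, uniform-in-$i$ tail estimate (and makes explicit where $D<p$ enters, through orthogonality of the $D$ PE directions and the union bound); the paper's route buys sharper constants and avoids exponent-level union bounds at the price of heavier moment calculations.

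Two small points to tighten. First, your stated standard deviation $O(v_x\sqrt p)$ for the random cross exponent should be $O(v_x\|p_{i2}\|)=O(v_x\sqrt{p\log p})$ (with an extra embedding scale if $W_{in}$ is not isometric); the conclusion is unaffected since the diagonal $v_xc^2=\Theta(c_0^2\sqrt p(\log p)^{3/2})$ still wins by a $\sqrt p$ factor. Second, for the exact $\tfrac12$--$\tfrac12$ split at the even query you need the two dominant exponents to agree up to $o(1)$, and their difference is $v_x(W_{in}[x_i;0])^{\top}p_{i2}$ plus the $v_x$-weighted $x$--$x$ diagonal; you should say explicitly why these are negligible under your choice of scales (this is exactly the role of $p\gg d$ and of how $W_{in}$ interacts with the PE), rather than folding it silently into item (iii) — the paper's own treatment handles this via the same ``$(1+o(1))$'' step, so the burden is identical, but it is the one place where the transcription from Theorem \ref{thm:icl_no_emb} is not purely mechanical.
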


From Theorem \ref{prop:icl_emb}, the output of the first layer can be viewed as structured data with PE as bias. The conditions on $W_{in}$ and $W_{KQ,1}$ and the proof of Theorem \ref{prop:icl_emb} are postponed to the appendix. In the second layer, following \cite{zhang2023trained} for linear attention, one can obtain a good ICL performance:
\begin{theorem}[ICL performance with embedding matrix]\label{thm:icl_prediction}
    Consider the following input for a transformer with one single-head attention layer:
\begin{eqnarray*}
    E_{:,2i-1}=(v+1)\left(W_{in}\begin{bmatrix}
        x_i\\
        0
    \end{bmatrix}+p_{i}\right),
\end{eqnarray*}
and
\begin{eqnarray*}
    E_{:,2i}=\frac{v}{2}W_{in}\begin{bmatrix}
        x_i\\0
    \end{bmatrix} +\left( \frac{v}{2}+1 \right)W_{in}\begin{bmatrix}
        0\\y_i
    \end{bmatrix} +(v+1)p_{i},
\end{eqnarray*}
where $p_i=p_{i1}=p_{i2}$ from the designed PE (\ref{eqn:new_pe}).

Then,
there exists some transformer with linear attention such that $\mathbb{E}(\widehat y_q-y_q)^2=O(d/D)$.

\end{theorem}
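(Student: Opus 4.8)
\textbf{Proof proposal for Theorem \ref{thm:icl_prediction}.}

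The plan is to reduce the ICL prediction problem with the ``post–first–layer'' input to the structured-data analysis of \cite{zhang2023trained}. The key observation is that the given columns $E_{:,2i-1}$ and $E_{:,2i}$ together encode exactly the same information as a structured token $(x_i, y_i)$, up to (i) an invertible linear reparametrization through $W_{in}$, and (ii) additive position-encoding biases $p_i$. So the first step is to exhibit an explicit projection/linear map acting on the token embeddings of the second-layer attention that kills the $p_i$ directions and the redundant copies of $x_i$, and isolates the pair $(x_i,y_i)$ in a canonical structured form. Concretely, I would choose $W_{KQ,2}$ and $W_{V,2}$ so that, after the linear attention computes $\sum_i E_{:,2i}\, \langle E_{:,2i}, W_{KQ,2} E_{:,2D+1}\rangle$ (plus the analogous odd-column terms, which we can arrange to vanish or to be harmless), the effective readout is $\frac{1}{D}\sum_i x_i x_i^\top \theta \cdot (\text{scalar})$ contracted against $x_q$ — i.e. the empirical ridgeless estimator. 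Since $W_{in}$ is (by the hypotheses inherited from Theorem \ref{prop:icl_emb}) injective on the relevant $(d+1)$-dimensional subspace, its action can be absorbed into $W_{KQ,2}$ and $W_{V,2}$; and since the $p_i$ are mutually orthogonal, constant across the two columns of example $i$, and orthogonal to the data directions, they can be annihilated by composing with the projection onto the orthogonal complement of $\mathrm{span}\{p_i\}$. I would carry out this bookkeeping first, arriving at an equivalent ``clean'' input of the form $\tilde E_{:,i} = (x_i, y_i)$ for $i\le D$ and $\tilde E_{:,D+1}=(x_q,0)$, exactly the structured setup.

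The second step is to invoke (or re-derive in one line) the known fact that for structured data a single linear-attention layer implements one step of gradient descent / the least-squares predictor: there is a choice of attention parameters for which $\widehat y_q = x_q^\top \big(\tfrac{1}{D}\sum_i x_i x_i^\top\big) \theta$ up to the normalization, and more generally $\widehat y_q = x_q^\top \widehat\Sigma\,\theta$ for a tunable $\widehat\Sigma$ proportional to the sample covariance. The third step is the statistical estimate: with $x_i \sim N(0,I_d)$ i.i.d.\ and $y_i = x_i^\top\theta$, we have $\mathbb{E}\,(\widehat y_q - y_q)^2 = \mathbb{E}\, x_q^\top(\widehat\Sigma - I_d)\theta\theta^\top(\widehat\Sigma - I_d)x_q$; using $\theta\sim N(0,I_d/d)$, independence of $x_q$, and the Wishart moment bounds $\mathbb{E}\|\widehat\Sigma - I_d\|_F^2 = \Theta(d^2/D)$ (or the appropriate scaled version after optimizing the scalar in front of $\widehat\Sigma$), one gets $\mathbb{E}(\widehat y_q-y_q)^2 = \Theta(d/D)$. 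The optimal scalar multiple in front of the sample covariance (a shrinkage factor $D/(D+d+1)$ type correction) only affects constants, so I would not optimize it sharply — the stated bound is $O(d/D)$, which the vanilla ridgeless choice already attains.

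The main obstacle I anticipate is \emph{not} the statistical computation (that is standard) but the algebraic reduction in the first step: one must check that the two columns $E_{:,2i-1}, E_{:,2i}$ — which both carry $W_{in}(x_i,0)^\top$ and both carry the bias $p_i$ — can be disentangled so that the odd columns do not inject an uncontrolled $\sum_i x_i(x_i^\top x_q)$ term with the wrong coefficient, and that the PE biases $p_i$ (which appear inside the softmax-free bilinear form with coefficients like $(v+1)^2\|p_i\|^2$, and $\|p_i\|^2 = 2c^2 = \Theta(p\log p)$ is large) do not blow up the attention logits. The resolution is that linear attention has no exponential, so large $\|p_i\|^2$ enters only linearly and can be exactly cancelled by the projection $W_{KQ,2}$; one must simply verify the projection exists given the rank conditions on $W_{in}$ from Theorem \ref{prop:icl_emb}, i.e.\ that $\mathrm{span}\{p_i\}_{i\le D}$ and $W_{in}(\text{data space})$ are in direct sum inside $\mathbb{R}^p$, which holds when $D<p$ and $p\gg d$. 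Once that direct-sum / rank bookkeeping is in place, the rest is the structured-data argument of \cite{zhang2023trained} applied verbatim.
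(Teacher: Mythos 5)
Your proposal is correct in substance and rests on the same mechanism as the paper's proof: since the second layer is linear attention, the prediction is a bilinear form $w_v^\top E E^\top q$ in the tokens, and one chooses the second-layer parameters so that the readout annihilates the PE contributions and the $\sum_i x_ix_i^\top$ part while retaining the $\sum_i y_i x_i^\top$ cross term, after which the error is exactly the Wishart fluctuation $\mathbb{E}\,\theta^\top(\widehat\Sigma-I_d)^2\theta=O(d/D)$ you compute. The packaging differs: you first build an oblique projection killing $\mathrm{span}\{p_i\}_{i\le D}$ and absorbing $W_{in}$, so as to literally recover the structured setup of \cite{zhang2023trained}, whereas the paper expands $EE^\top$ into the terms $C_1,\dots,C_5$, argues the $x$--PE and $y$--PE cross terms are negligible, and then exhibits an explicit block $W_{in}$ (with $W_{in}^\top W_{in}=mI_d$) and readout $w_v\propto e_p$ satisfying $w_v^\top W_{in}=e_{d+1}^\top$, $w_v^\top p_i=0$ for $i\le D$, and $p_i^\top p_{D+1}=0$, so that $y_q$ survives from $C_5$ while $C_1$, $C_3$ and the query's PE drop out coordinatewise. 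One concrete caveat on your ``rank bookkeeping'': a linear map that annihilates all of $\mathrm{span}\{p_i\}_{i\le D}$ (dimension $D$) while acting injectively on the $(d+1)$-dimensional image of $W_{in}$ requires $D+d+1\le p$ together with a choice of $W_{in}$ whose column space meets $\mathrm{span}\{e_1,\dots,e_D\}$ trivially; this is not implied by ``$D<p$ and $p\gg d$'' (take $D=p-1$), and it can fail even for natural block-indicator choices of $W_{in}$ whose columns are supported inside the first $D$ coordinates. The fix is either to assume $D+d+1\le p$ and pick $W_{in}$ accordingly, or to weaken the requirement as the paper does: you never need to project out the whole PE span, only the single readout functional must vanish on $p_1,\dots,p_D$ and the query-side PE must cancel (here via $p_i^\top p_{D+1}=0$), which needs only $D<p$. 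With that adjustment, your reduction-to-structured-data step, the odd-column bookkeeping (harmless because the readout only reads the $y$-component), and the concluding $O(d/D)$ estimate all go through and match the paper's conclusion.
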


\begin{remark}
    To simplify our analysis, we use linear attention in the second attention layer to avoid the bias from PE. In the first layer, while successfully matching $x_i$ and $y_i$,  PE can greatly affect the attention scores because of the nonlinear $\exp$ function in the softmax operator. However, in the second layer, PE has no significant effect on the attention score due to the use of linear attention. If the second layer is also softmax attention, PE can lead to a bias in the final prediction. In such a case, the corresponding $c$ in the PE formulation (\ref{eqn:new_pe}) should be smaller to balance the trade-off between the matching effect in the first layer and the ICL prediction performance. 

\end{remark}

Due to the page limit, we postpone some additional simulation studies to Section \ref{sec:appendix:simulation} in Appendix. Briefly speaking, we keep increasing the number of examples $D$ and observe that the ICL performance using small $p$ cannot be as good as the case with large $p$ but small $D$.
\color{black}

\subsection{With PE but without Attention Mask}
\color{black}

While in the previous sections we explain the importance of attention mask and two attention layers, in this section, we consider the case where no attention mask is applied. Briefly speaking, when PE is applied, even for a one-layer transformer, the minimal training loss is close to zero through stealing the label information from later tokens. On the other hand, in terms of the testing performance, the loss does not converge.

\begin{proposition}\label{thm:one_layer_with_pe}
    For transformers with one softmax attention layer, following the same condition as Theorem \ref{prop:icl_emb}, when taking $c=c_0\sqrt{p\log p}$,
\begin{eqnarray*}
    O_{:,2i-1}=\frac{v}{2}\left(W_{in}\begin{bmatrix}
        x_i\\y_i
    \end{bmatrix} +p_{i1}+p_{i2}\right)+\left(W_{in}\begin{bmatrix}
        x_i\\0
    \end{bmatrix} +p_{i1}\right)+o_p.
\end{eqnarray*}
On the other hand, following the same argument as in Theorem \ref{thm:one_layer}, when predicting $y_i$, we remove the column $(0,y_i)$ and later columns, then the best possible prediction loss becomes $\Theta(1)$.
\end{proposition}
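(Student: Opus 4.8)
The plan is to split Proposition~\ref{thm:one_layer_with_pe} into two essentially independent claims: (i) computing the first-layer output $O_{:,2i-1}$ when PE is present but no mask is applied, and (ii) lower-bounding the best achievable test loss by $\Theta(1)$. For part (i) I would mimic the proof of Theorem~\ref{prop:icl_emb} almost verbatim, the only change being that the summation over columns in the softmax now runs over \emph{all} tokens rather than only over $j\le i$ (no upper-triangular $M$). Concretely, I would write the $(2i-1)$th column of $E\,\phi(\cdot)$, isolate the three relevant exponential weights — the self-term $\exp(v_x\|W_{in}(x_i,0)+p_{i1}\|^2)$-type contribution, the matched partner term coming from the $(0,y_i)$ column (which now sees $x_i$ because there is no mask blocking it), and all the cross terms — and then invoke the same concentration estimates used in Theorem~\ref{prop:icl_emb} (the choice $v_x=\Theta(\sqrt{\log p/p})$, $c=c_0\sqrt{p\log p}$ makes the ``aligned'' exponents of order $c^2 v_x = \Theta(p\log p\cdot\sqrt{\log p/p})$ dominate the $O(\sqrt{p}\,)$-scale cross exponents). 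The upshot is that the $(2i-1)$th column becomes a weighted average of the $(x_i,0)$ token and the $(0,y_i)$ token with weights roughly $1/2$ each after adding the residual stream, giving exactly the displayed formula with an $o_p$ error uniform in $i$.

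For part (ii), the argument is the $\theta\mapsto-\theta$ symmetry already exploited in Theorem~\ref{thm:one_layer}, reinterpreted in the testing protocol described in the ``Experiment Settings'' paragraph. When predicting $y_i$, the correct inference-time prompt contains $(x_i,0)$ but \emph{not} the column $(0,y_i)$ nor any later columns; hence the first-layer output can only combine $x_1,\dots,x_i$ and $y_1,\dots,y_{i-1}$, none of which carries information about the sign of $y_i=x_i^\top\theta$. I would make this precise by observing that the map $(\{x_j\},\theta)\mapsto(\{x_j\},-\theta)$ leaves the truncated prompt unchanged while flipping $y_i$, so $\widehat y_i$ is a fixed (deterministic, parameter-dependent) function of data that is invariant under this flip; then $\mathbb E(\widehat y_i-y_i)^2 = \mathbb E(\widehat y_i-(-y_i))^2$ by symmetry, forcing $\widehat y_i$ to be the conditional mean of a sign-symmetric quantity, i.e.\ $\mathbb E(\widehat y_i-y_i)^2 \ge \operatorname{Var}(y_i\mid \text{truncated prompt}) = \mathbb E[y_i^2] = \Theta(1)$ for $i$ up to a constant — and more care gives $\Theta(1)$ for all the early $y_i$'s, so the summed loss is $\Theta(1)$ no matter how the parameters are chosen. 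This is exactly ``following the same argument as in Theorem~\ref{thm:one_layer},'' so I would state it as such and defer the routine details to the appendix.

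The main obstacle is making part (i) rigorous \emph{uniformly in $i$} without the mask: with the full (un-truncated) prompt of length $2D+1$, the $(2i-1)$th column's softmax denominator contains $\Theta(D)$ cross terms $\exp(v_x\langle\text{stuff}\rangle)$, and I must show their aggregate is still $o_p$ relative to the two aligned terms. Because $v_x=\Theta(\sqrt{\log p/p})$ and the inner products of independent Gaussian-ish vectors are $O(\sqrt p\,)$ in magnitude, each cross exponent is $O(\sqrt{\log p}\,)$, so each cross weight is at most polynomial in $p$, and summing $\Theta(D)$ of them against one aligned weight of size $\exp(\Theta(p\log p \cdot \sqrt{\log p /p})) = \exp(\Theta(\sqrt{p}\,(\log p)^{3/2}))$ is fine \emph{provided} $D$ is not super-exponentially large — but I should double-check whether the regime $D<p$ already assumed in Theorem~\ref{prop:icl_emb} is enough, and whether a union bound over $i\in[D]$ costs only a benign $\log D$ factor in the concentration inequalities. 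A secondary subtlety is bookkeeping the PE cross-terms $p_{j1}^\top p_{i2}$: the construction~\eqref{eqn:new_pe} makes distinct examples' PE blocks orthogonal, so these vanish exactly, but I want to confirm the $W_{in}$-conjugated version in Theorem~\ref{prop:icl_emb}'s hypotheses preserves that orthogonality. Once those two points are nailed down, everything else is the same computation as in Theorems~\ref{prop:icl_emb} and~\ref{thm:one_layer}.
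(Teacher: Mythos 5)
Your treatment of the first claim is essentially the paper's: without the mask, the $(2i-1)$th column's softmax row now includes the partner column $(0,y_i)$, whose PE block is aligned with $p_{i1}$, so the two aligned exponents of order $v_xc^2$ dominate the $O(\sqrt{\log p})$-scale cross exponents and the column becomes the residual plus $v$ times a $\tfrac12$--$\tfrac12$ average of the two tokens of example $i$; this is the same concentration bookkeeping as in Theorem~\ref{prop:icl_emb}, and the regime $D<p$ together with the union bound over $i$ is handled exactly as there.

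The gap is in your second part. You claim that the map $(\{x_j\},\theta)\mapsto(\{x_j\},-\theta)$ leaves the truncated prompt unchanged while flipping $y_i$, and you deduce $\mathbb{E}(\widehat y_i-y_i)^2\ge \operatorname{Var}(y_i\mid\text{prompt})=\Theta(1)$. But the truncated prompt contains $y_1,\dots,y_{i-1}$, and these flip sign with $\theta$, so the prompt is \emph{not} invariant for any $i>1$; moreover, once $i-1\ge d$ (and in particular for the query with $D\ge d$ examples) the conditional variance of $y_i$ given the prompt is essentially zero, because $\theta$ is identifiable from the preceding examples. So an information-theoretic lower bound cannot give $\Theta(1)$ in the regime the proposition is actually about — the point of the statement is that the information is present in the prompt but a single softmax attention layer cannot exploit it. The paper's argument (Theorem~\ref{thm:one_layer}) is architecture-specific: with the label column removed, the prediction of $y_i$ is a softmax-weighted average whose only label-dependent part is $\sum_{j<i} y_j\exp(y_j b^{\top}x_i)$ (PE shifts the exponents but does not change this structure); as the number of preceding examples grows this concentrates on its expectation, which takes the same value at $\theta$ and $-\theta$ (the denominator satisfies $\mathcal{E}(\theta)=\mathcal{E}(-\theta)$ and the numerator's expectation is $D\|\theta\|^2 b^{\top}x_i\exp(\|\theta\|^2 b^{\top}x_i/2)$, even in $\theta$), so averaging the loss over the pair $\pm\theta$ forces the optimal $\|b\|\to 0$ and hence $\widehat y_i\approx 0$, giving loss $\Theta(1)$ for every such prediction. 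You should replace your conditional-variance step with this concentration-plus-symmetry argument applied to the (unmasked, PE-shifted) one-layer attention scores; with that substitution the rest of your outline goes through.
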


\begin{wrapfigure}{r}{0.45\textwidth}
\vspace{-0.2in}
    \centering
        \includegraphics[scale=0.6]{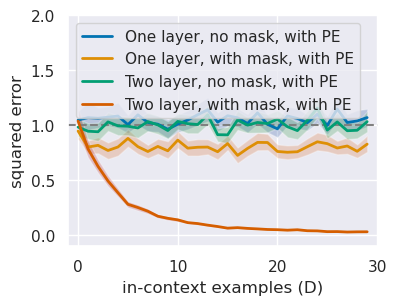}
    \caption{ICL performance of one/two attention layer, with/without mask, with PE.}
    \vspace{-0.5in}
    \label{fig:pe}
\end{wrapfigure}

From Proposition \ref{thm:one_layer_with_pe}, the information of $y_i$ at the $2i$th token is leaked to the $2i-1$th token, and therefore the training loss can be very small.

We conduct some simulations to examine the ICL prediction performance with PE when changing the number of layers and the attention mask. In Figure \ref{fig:pe}, with PE, one can see that the ICL prediction loss is small only when there are two layers and the attention mask is applied. Besides Figure \ref{fig:pe}, we also observe that the training loss for the one-/two-layer transformers with PE without mask are almost zero, i.e., the information of $y_i$ at the $2i$th token is completely leaked to the $2i-1$th token.

\section{Conclusions}
\color{black}
In this paper, we conduct experiments and explain why transformers can learn the unstructured data $E_2$. Different from the structured prompt format $E_1$ where $x$ and $y$ are in the same column, $E_2$ separates them. When training with infinite prompts, (1) the transformer should be of at least two attention layers with the look-ahead attention mask to learn $E_2$; (2) PE can further connect $x$ and $y$ and balance the mean and variance in the prediction loss, and in addition  (3) a high embedding dimension $p$ improves the ICL performance.

There are several possible future directions. First, as mentioned in Theorem \ref{thm:icl_prediction}, linear attention in the second layer facilitates ICL using the unstructured data even if PE is amplified in the first layer with softmax attention. Further analysis can be conducted to quantify how softmax attention in the second layer affects the ICL performance. Second, one may also consider the effect of multiple attention layers. In particular, when there are more than two layers, it is not necessary to match the two layers perfectly. One can apply a weaker PE and accumulate its effect through the layers. In addition, as ICL compares the similarity of $x_q$ and other $x_i$s to determine the weight for $y_i$s, this process may also happen in all the attention layers. Detailed quantification of these two effects can deepen the understanding of how multiple attention layers work. Third, another possible direction is to study the effect of PE in the training process. This paper mainly considers PE when training the transformer, with almost infinite examples, and does not consider the generalization gap. One may study the generalization performance given PE. On one hand, for the completely trainable PE, the number of trainable parameters can be significantly increased, and the generalization gap can be larger. On the other hand, from how the PE is applied in the transformer, its effect on the generalization gap may differ from the other parameters. Finally, one may extend the analysis to chain-of-thought, e.g., \cite{li2024chain,li2023dissecting}.

In terms of broader impact, since this paper primarily focuses on theoretical understanding of the mechanism of transformer in learning unstructured data rather than developing new methodologies, there is no direct negative social impact. For limitations, since this paper only considers a simple transformer architecture, it may not capture the full characteristics as the transformers used in real practice. Future works may be considered to generalize the analysis to large-scale transformers.





\newpage
\onecolumn
\bibliographystyle{plainnat}
\bibliography{ref}

\newpage
\appendix

Below is a summary of the appendix:
\begin{itemize}
    \item Section \ref{sec:appendix:one_layer}: proofs for one-layer transformers.
    \item Section \ref{sec:appendix:formula}: proofs for two-layer transformers without PE.
    \item Section \ref{sec:appendix:proof}: proofs for for transformers with PE.
    \item Section \ref{sec:appendix:configuration}: details of configurations of experiments.
    \item Section \ref{sec:appendix:other}: additional figures.
    \item Section \ref{sec:appendix:simulation}: additional simulations.
\end{itemize}

Notation-wise, in the proofs, we use ``$o$" to represent some negligible terms which are statistical estimation error such as $(\sum_{i=1}^n x_i)/n=\mathbb{E}(\sum_{i=1}^n x_i)/n+o$, and use ``$\approx$" to represent approximation error caused by operators such as $\sum_{i=1}^n 1/i\approx \log(n)$.



\section{One-Layer Transformer without PE}\label{sec:appendix:one_layer}
\begin{proof}[Proof of Theorem \ref{thm:one_layer}]
    We first consider the case with an attention mask.
    
    The key idea of the proof is that, since $\theta\sim N(0,I_d/d)$, we also have $-\theta\sim N(0,I_d/d)$. When minimizing the loss associated with $y=\theta^{\top}x$ and $y=-\theta^{\top}x$, the optimal solution is that $\widehat y\approx 0$.

    When $y=\theta^{\top}x$, we have
    \begin{eqnarray*}
        \widehat y_q&=&\frac{\sum 0\cdot\exp(x_i^{\top}Ax_q)+\sum y_i\exp(y_ib^{\top}x_q)}{\sum \exp(x_i^{\top}Ax_q)+\exp(x_q^{\top}Ax_q) + \sum \exp(y_ib^{\top}x_q) }\\
        &=&\frac{\sum y_i\exp(y_ib^{\top}x_q)}{\sum \exp(x_i^{\top}Ax_q)+\exp(x_q^{\top}Ax_q) + \sum \exp(y_ib^{\top}x_q) }.
    \end{eqnarray*}
    When $D\rightarrow\infty$, fixing $\theta$, we have
    \begin{eqnarray*}
        \widehat y_q&=&\frac{\sum y_i\exp(y_ib^{\top}x_q)}{\mathbb{E}[\sum \exp(x_i^{\top}Ax_q)+\exp(x_q^{\top}Ax_q) + \sum \exp(y_ib^{\top}x_q) ]}\\
        &&+\frac{\sum y_i\exp(y_ib^{\top}x_q)}{\sum \exp(x_i^{\top}Ax_q)+\exp(x_q^{\top}Ax_q) + \sum \exp(y_ib^{\top}x_q) }\\
        &&-\frac{\sum y_i\exp(y_ib^{\top}x_q)}{\mathbb{E}[\sum \exp(x_i^{\top}Ax_q)+\exp(x_q^{\top}Ax_q) + \sum \exp(y_ib^{\top}x_q) ]}\\
        &=& \frac{\sum y_i\exp(y_ib^{\top}x_q)}{\mathbb{E}[\sum \exp(x_i^{\top}Ax_q)+\exp(x_q^{\top}Ax_q) + \sum \exp(y_ib^{\top}x_q) ]} + o\\
        &=&\frac{\mathbb{E}[\sum y_i\exp(y_ib^{\top}x_q)]}{\mathbb{E}[\sum \exp(x_i^{\top}Ax_q)+\exp(x_q^{\top}Ax_q) + \sum \exp(y_ib^{\top}x_q) ]}\\
        &&+\frac{\sum y_i\exp(y_ib^{\top}x_q)}{\mathbb{E}[\sum \exp(x_i^{\top}Ax_q)+\exp(x_q^{\top}Ax_q) + \sum \exp(y_ib^{\top}x_q) ]}\\
        &&-\frac{\mathbb{E}[\sum y_i\exp(y_ib^{\top}x_q)]}{\mathbb{E}[\sum \exp(x_i^{\top}Ax_q)+\exp(x_q^{\top}Ax_q) + \sum \exp(y_ib^{\top}x_q) ]}+o\\
        &=&\frac{\mathbb{E}[\sum y_i\exp(y_ib^{\top}x_q)]}{\mathbb{E}[\sum \exp(x_i^{\top}Ax_q)+\exp(x_q^{\top}Ax_q) + \sum \exp(y_ib^{\top}x_q) ]}+o\\
        &:=&\frac{\mathbb{E}[\sum y_i\exp(y_ib^{\top}x_q)]}{\mathcal{E}(\theta)}+o.
    \end{eqnarray*}
    The term $\mathcal{E}(\theta)$ is the generic representation for the term on the denominator. The term "$o$" in the above derivation represents some negligible term. In particular, since
    \begin{eqnarray*}
        &&{\sum \exp(x_i^{\top}Ax_q)+\exp(x_q^{\top}Ax_q) + \sum \exp(y_ib^{\top}x_q) }\\
        &=&{\mathbb{E}[\sum \exp(x_i^{\top}Ax_q)+\exp(x_q^{\top}Ax_q) + \sum \exp(y_ib^{\top}x_q) ]}+O_p(\sqrt{D}),
    \end{eqnarray*}
    using Taylor expansion, we obtain that
    \begin{eqnarray*}
        &&\frac{1}{\sum \exp(x_i^{\top}Ax_q)+\exp(x_q^{\top}Ax_q) + \sum \exp(y_ib^{\top}x_q) }\\
        &&-\frac{1}{\mathbb{E}[\sum \exp(x_i^{\top}Ax_q)+\exp(x_q^{\top}Ax_q) + \sum \exp(y_ib^{\top}x_q) ]}\\
        &=&\frac{\sum \exp(x_i^{\top}Ax_q)+\exp(x_q^{\top}Ax_q) + \sum \exp(y_ib^{\top}x_q)-\mathbb{E}[\sum \exp(x_i^{\top}Ax_q)+\exp(x_q^{\top}Ax_q) + \sum \exp(y_ib^{\top}x_q) ]}{\mathbb{E}^2[\sum \exp(x_i^{\top}Ax_q)+\exp(x_q^{\top}Ax_q) + \sum \exp(y_ib^{\top}x_q) ]}\\
        &&+o\left(\frac{\sum \exp(x_i^{\top}Ax_q)+\exp(x_q^{\top}Ax_q) + \sum \exp(y_ib^{\top}x_q)-\mathbb{E}[\sum \exp(x_i^{\top}Ax_q)+\exp(x_q^{\top}Ax_q) + \sum \exp(y_ib^{\top}x_q) ]}{\mathbb{E}^2[\sum \exp(x_i^{\top}Ax_q)+\exp(x_q^{\top}Ax_q) + \sum \exp(y_ib^{\top}x_q) ]}\right)\\
        &=&\frac{O_p(\sqrt{D})}{\mathbb{E}^2[\sum \exp(x_i^{\top}Ax_q)+\exp(x_q^{\top}Ax_q) + \sum \exp(y_ib^{\top}x_q) ]}\\
        &=&\frac{O_p(1/\sqrt{D})}{\mathbb{E}[\sum \exp(x_i^{\top}Ax_q)+\exp(x_q^{\top}Ax_q) + \sum \exp(y_ib^{\top}x_q) ]},
    \end{eqnarray*}
    and it is negligible compared to
    $$ \frac{1}{\mathbb{E}[\sum \exp(x_i^{\top}Ax_q)+\exp(x_q^{\top}Ax_q) + \sum \exp(y_ib^{\top}x_q) ]}.$$

   On the other hand, for the numerator, we have
    \begin{eqnarray*}
        \mathbb{E}\left[\sum y_i\exp(y_ib^{\top}x_q)\right]
        &=& D\mathbb{E}[y_i\exp(y_ib^{\top}x_q)]\\
        &=&D\mathbb{E}[x_i^{\top}\theta\exp(\theta^{\top}x_i b^{\top}x_q)]\\
        &=&D \|\theta\|^2b^{\top}x_q\exp\left(\frac{1}{2}\|\theta\|^2b^{\top}x_q\right).
    \end{eqnarray*}
    Similarly, when taking $y=-\theta^{\top}x$, we have $\mathcal{E}(\theta)=\mathcal{E}(-\theta)$. In addition, we again obtain that
    \begin{eqnarray*}
        \mathbb{E}\left[\sum y_i\exp(y_ib^{\top}x_q)\right]
        =D \|\theta\|^2b^{\top}x_q\exp\left(\frac{1}{2}\|\theta\|^2b^{\top}x_q\right).
    \end{eqnarray*}
    When calculating the prediction loss, we have
    \begin{eqnarray*}
        &&\mathbb{E}_{\theta}(\widehat y_q-y_q)^2\\
        &=&\frac{1}{2}\mathbb{E}_{\theta}(\widehat y_q-y_q)^2+\frac{1}{2}\mathbb{E}_{-\theta}(\widehat y_q-y_q)^2\\
        &=&\mathbb{E}_{\theta}\left[\frac{1}{2}\left(\frac{D\|\theta\|^2b^{\top}x_q\exp(\|\theta\|^2b^{\top}x_q/2)}{\mathcal{E}(\theta)}-\theta^{\top}x_q\right)^2+\frac{1}{2}\left(\frac{D\|\theta\|^2b^{\top}x_q\exp(\|\theta\|^2b^{\top}x_q/2)}{\mathcal{E}(\theta)}+\theta^{\top}x_q\right)^2\right]+o,
    \end{eqnarray*}
    to minimize which we need to take $\|b\|\rightarrow 0$.

    Besides $x_q$, the above argument also applies for $x_i$s for large $i$s.

    To analyze the case without attention mask, the different thing compared to the above steps is that we know the later columns in the input, i.e., when $y=\theta^{\top}x$,
        \begin{eqnarray*}
        \widehat y_j&=&\frac{\sum 0\cdot\exp(x_i^{\top}Ax_j)+\sum y_i\exp(y_ib^{\top}x_j)+ y_j\exp(y_j b^{\top}x_j)}{\sum \exp(x_i^{\top}Ax_j)+\exp(x_j^{\top}Ax_j) + \sum \exp(y_ib^{\top}x_j) + \exp(y_j b^{\top}x_j) }\\
        &=&\frac{D\|\theta\|^2b^{\top}x_j\exp(\|\theta\|^2b^{\top}x_j/2) + x_j^{\top}\theta\exp(b^{\top}x_jx_j^{\top}\theta)}{\mathcal{E}(\theta)}+o.
    \end{eqnarray*}
    When $y=-\theta^{\top}x$,
        \begin{eqnarray*}
        \widehat y_j=\frac{D\|\theta\|^2b^{\top}x_j\exp(\|\theta\|^2b^{\top}x_j/2) - x_j^{\top}\theta\exp(-b^{\top}x_jx_j^{\top}\theta)}{\mathcal{E}(\theta)}+o.
    \end{eqnarray*}
    With large probability, when $\|b\|$ is finite, $D\|\theta\|^2b^{\top}x_j\exp(\|\theta\|^2b^{\top}x_j/2)\gg \exp(b^{\top}x_jx_j^{\top}\theta)$, thus the optimal $b$ still satisfies that $\|b\|\rightarrow0$.
\end{proof}

\section{Two-Layer Transformer without PE}\label{sec:appendix:formula}

\begin{proof}[Proof of Theorem \ref{thm:two_layer_lazy}]
    
In the first layer, since $W^K=W^Q=0$, we have
\begin{eqnarray*}
    \phi( (W^KE_2)^{\top}(W^QE_2) )=\phi\left(\begin{bmatrix}
        0 & 0 & \ldots & 0\\
        \ldots\\
        0 & 0 & \ldots & 0
    \end{bmatrix}\right)=\begin{bmatrix}
        1 & \frac{1}{2} & \frac{1}{3} & \ldots & \frac{1}{2D+1}\\
        0 & \frac{1}{2} & \frac{1}{3} & \ldots & \frac{1}{2D+1}\\
        0 & 0 & \frac{1}{3} & \ldots & \frac{1}{2D+1}\\
        \ldots\\
        0 & 0 & 0 & \ldots & \frac{1}{2D+1}
    \end{bmatrix}.
\end{eqnarray*}
Further, because $W^V=I_{d+1}$, the output of the first layer becomes
\begin{eqnarray}
            f(E_2)&=& E+ E\phi( (W^KE_2)^{\top}(W^QE_2) )\nonumber\\
            &=& E+\begin{bmatrix}
                x_1 & \frac{1}{2}x_1 & \frac{1}{3}x_1+\frac{1}{3}x_2 & \frac{1}{4}x_1+\frac{1}{4}x_2 & \ldots & \frac{1}{2D+1}x_{\text{q}}+\frac{1}{2D+1}\sum x_i\\
                0 & \frac{1}{2}y_1 & \frac{1}{3}y_1 & \frac{1}{4}y_1+\frac{1}{4}y_2 & \ldots & \frac{1}{2D+1}\sum y_i
            \end{bmatrix}\nonumber\\
            &=&\begin{bmatrix}
                2x_1 & \frac{1}{2}x_1 & \frac{1}{3}x_1+\frac{4}{3}x_2 & \frac{1}{4}x_1+\frac{1}{4}x_2 & \ldots & \frac{2D+2}{2D+1}x_{\text{q}}+\frac{1}{2D+1}\sum x_i\\
                0 & \frac{3}{2}y_1 & \frac{1}{3}y_1 & \frac{1}{4}y_1+\frac{5}{4}y_2 & \ldots & \frac{1}{2D+1}\sum y_i
            \end{bmatrix},\nonumber
        \end{eqnarray}
        which is the formula (\ref{eqn:first_layer_output}) in Section \ref{sec:two_layer}.
        
        To obtain (\ref{eqn:weight}), in the second layer, taking a linear $\phi$ and $(W^K)^{\top}W^Q=\begin{bmatrix}
            I_d/\log(D) & 0\\0 & 0
        \end{bmatrix}$, the last column is
        \begin{eqnarray*}
             &&\phi\left(f(E_2)^{\top}\begin{bmatrix}
            I_d/\log(D) & 0\\0 & 0
        \end{bmatrix} f(E_2) \right)_{2D+1}\\
        &=&\frac{1}{\log(D)} \begin{bmatrix}
            2x_1 & 0\\
            \frac{1}{2}x_1 &  \frac{3}{2}y_1\\
            \frac{1}{3}x_1+\frac{4}{3}x_2  &  \frac{1}{3}y_1\\
            \frac{1}{4}x_1+\frac{1}{4}x_2 & \frac{1}{4}y_1+\frac{5}{4}y_2 \\
            \ldots\\
             \frac{2D+2}{2D+1}x_{\text{q}}+\frac{1}{2D+1}\sum x_i & \frac{1}{2D+1}\sum y_i
        \end{bmatrix}\begin{bmatrix}
                 \frac{2D+2}{2D+1}x_{\text{q}}+\frac{1}{2D+1}\sum x_i\\
                \frac{1}{2D+1}\sum y_i
            \end{bmatrix}\\
            &=&\frac{1}{\log(D)} \begin{bmatrix}
            2x_1 & 0\\
            \frac{1}{2}x_1 &  \frac{3}{2}y_1\\
            \frac{1}{3}x_1+\frac{4}{3}x_2  &  \frac{1}{3}y_1\\
            \frac{1}{4}x_1+\frac{1}{4}x_2 & \frac{1}{4}y_1+\frac{5}{4}y_2 \\
            \ldots\\
             \frac{2D+2}{2D+1}x_{\text{q}}+\frac{1}{2D+1}\sum x_i & \frac{1}{2D+1}\sum y_i
        \end{bmatrix}\left(\begin{bmatrix}
            \frac{2D+2}{2D+1}x_{\text{q}}\\
            0
        \end{bmatrix}+\begin{bmatrix}
                 \frac{1}{2D+1}\sum x_i\\
                \frac{1}{2D+1}\sum y_i
            \end{bmatrix}\right),
        \end{eqnarray*}
        where
        \begin{eqnarray*}
           &&\frac{1}{\log(D)} \begin{bmatrix}
            2x_1 & 0\\
            \frac{1}{2}x_1 &  \frac{3}{2}y_1\\
            \frac{1}{3}x_1+\frac{4}{3}x_2  &  \frac{1}{3}y_1\\
            \frac{1}{4}x_1+\frac{1}{4}x_2 & \frac{1}{4}y_1+\frac{5}{4}y_2 \\
            \ldots\\
             \frac{2D+2}{2D+1}x_{\text{q}}+\frac{1}{2D+1}\sum x_i & \frac{1}{2D+1}\sum y_i
        \end{bmatrix}\begin{bmatrix}
            \frac{2D+2}{2D+1}x_{\text{q}}\\
            0
        \end{bmatrix}\\
        &=&\underbrace{\frac{1}{\log(D)}\frac{2D+2}{2D+1}\begin{bmatrix}
            2x_1^{\top}x_{\text{q}}\\
            \frac{1}{2}x_1^{\top}x_{\text{q}}\\
            (\frac{1}{3}x_1+\frac{4}{3}x_2)^{\top}x_{\text{q}}\\
            \frac{1}{4}x_1^{\top}x_{\text{q}}+\frac{1}{4}x_2^{\top}x_{\text{q}}\\
            \ldots\\
            \frac{1}{2D+2}x_{\text{q}}^{\top}x_{\text{q}}+\frac{1}{2D+1}(\sum x_i)^{\top}x_{\text{q}}
        \end{bmatrix}}_{:=A_1},
        \end{eqnarray*}
        and the other term is
        \begin{eqnarray*}
             \underbrace{\frac{1}{\log(D)} \begin{bmatrix}
            2x_1 & 0\\
            \frac{1}{2}x_1 &  \frac{3}{2}y_1\\
            \frac{1}{3}x_1+\frac{4}{3}x_2  &  \frac{1}{3}y_1\\
            \frac{1}{4}x_1+\frac{1}{4}x_2 & \frac{1}{4}y_1+\frac{5}{4}y_2 \\
            \ldots\\
             \frac{2D+2}{2D+1}x_{\text{q}}+\frac{1}{2D+1}\sum x_i & \frac{1}{2D+1}\sum y_i
        \end{bmatrix}\begin{bmatrix}
                 \frac{1}{2D+1}\sum x_i\\
                \frac{1}{2D+1}\sum y_i
            \end{bmatrix}}_{:=A_2}.
        \end{eqnarray*}

        While $A_1$ is closely related to $x_q$, $A_2$ is more related to the average of $x_i$s. 
        
        To show that $A_2$ is negligible, we have
        \begin{eqnarray*}
            A_2&=&\frac{1}{\log(D)}\begin{bmatrix}
                2x_1^{\top}\frac{1}{2D+1}\sum x_i \\
                \frac{1}{2}x_1^{\top}\frac{1}{2D+1}\sum x_i+\frac{3}{2}y_1\frac{1}{2D+1}\sum y_i\\
                \ldots\\
                \left(\frac{2D+2}{2D+1}x_{\text{q}}+\frac{1}{2D+1}\sum x_i\right)\frac{1}{2D+1}\sum x_i + \frac{1}{(2D+1)^2}(\sum y_i)^2.
            \end{bmatrix}
        \end{eqnarray*}
        Since $x_i$s are i.i.d. $N(0,I_d)$, with probability tending to 1, uniformly for all $i$s,
        \begin{eqnarray*}
            \frac{1}{2D+1}x_i^{\top}(\sum x_i)=O_p(\sqrt{d/D}).
        \end{eqnarray*}
        On the other hand, in terms of $A_1$, we have all the elements in $A_1$ are in $O_p(\sqrt{d})$. As a result, $A_2$ is negligible.

        When last row of $W_{out}W^V$ as $[0,\ldots,0,1]$, we have
        \begin{eqnarray*}
             &&\widehat{y}_{\text{q}}\\ &=& f(E_2)_{d+1,:}f(E_2)^{\top}f(E_2)_{2D+1}/\log(D)\nonumber\\
            &=&f(E_2)_{d+1,:}A_1+o\nonumber\\
            &=&\frac{1}{\log(D)}\frac{2D+2}{2D+1}\begin{bmatrix}
                0 & \frac{3}{2}y_1 & \frac{1}{3}y_1 & \frac{1}{4}y_1+\frac{5}{4}y_2 & \ldots & \frac{1}{2D+1}\sum y_i
            \end{bmatrix}\begin{bmatrix}
            2x_1^{\top}x_{\text{q}}\\
            \frac{1}{2}x_1^{\top}x_{\text{q}}\\
            (\frac{1}{3}x_1+\frac{4}{3}x_2)^{\top}x_{\text{q}}\\
            \frac{1}{4}x_1^{\top}x_{\text{q}}+\frac{1}{4}x_2^{\top}x_{\text{q}}\\
            \ldots\\
            \frac{1}{2D+2}x_{\text{q}}^{\top}x_{\text{q}}+\frac{1}{2D+1}(\sum x_i)^{\top}x_{\text{q}}
        \end{bmatrix}+o\nonumber\\
            &=&\frac{1}{\log(D)}\frac{2D+2}{2D+1}\left( \frac{3}{4}\theta^{\top}x_1x_1^{\top}x_{\text{q}} + \frac{1}{9}\theta^{\top}x_1x_1^{\top}x_{\text{q}}+\frac{1}{16}\theta^{\top}x_1x_1^{\top}x_{\text{q}} +\frac{5}{16}\theta^{\top}x_1x_1^{\top}x_{\text{q}}+\ldots\right)+\frac{\epsilon}{\log(D)}+o\\
            &=&\frac{1}{\log(D)}x_{\text{q}}^{\top}\left(\sum_{i} \left( \frac{1}{i}-\frac{1}{2D} \right) x_ix_i^{\top}\right)\theta +\frac{1}{\log(D)}\epsilon+o
        \end{eqnarray*}
        where the term $\epsilon$ is the cross term of $y_ix_j^{\top}x_{\text{q}}$. In particular,
        \begin{eqnarray*}
            \epsilon&=&\frac{2D+2}{2D+1}\sum_{i\neq j}w_{ij}y_ix_j^{\top}x_q,
        \end{eqnarray*}
        with
        \begin{eqnarray*}
            &&w_{12}=\frac{2}{3}-\frac{1}{2D+1}+o,\; w_{13}=\frac{2}{5}-\frac{1}{2D+1}+o,\; w_{1j}=\frac{2}{2j-1}-\frac{1}{2D+1}+o,\\
            &&w_{21}=\frac{2}{4}-\frac{1}{2D+1}+o,\;w_{23}=\frac{2}{6}-\frac{1}{2D+1}+o,\;w_{2j}=\frac{2}{2j}-\frac{1}{2D+1}+o,\\
            &&w_{ij}=\frac{2}{2j-2+i}-\frac{1}{2D+1}+o,
        \end{eqnarray*}
        where $o$ represents the difference between $\sum 1/i^2$ and $\int 1/i^2$.
        
        When taking the expectation of $\epsilon$, $\mathbb{E}\epsilon=0$. In terms of $\mathbb{E}\epsilon^2$, we have
        \begin{eqnarray*}
            \mathbb{E}(x_i^{\top}x_j^{\top}x_q)^2=d,
        \end{eqnarray*}
        and
        \begin{eqnarray*}
            \mathbb{E}\epsilon^2 &\leq& d\sum_{i=1}^{D}\sum_{j> i}\left(\frac{2}{2i+j-2}+\frac{2}{2j+i-2}-\frac{2}{2D+1}\right)^2\\
            &\leq& 2d\sum_{i=1}^{D}\sum_{j> i}\left(\frac{2}{2i+j-2}\right)^2+\left(\frac{2}{2j+i-2}\right)^2+o\\
            &\approx& 2d\sum_{i=1}^D\frac{6}{i}+o\\
            &\approx& 12d\log(D)+o.
        \end{eqnarray*}
        When $D$ is large enough, $\epsilon/\log(D)=O_p(\sqrt{d/\log(D)})\xrightarrow{P} 0$.
\end{proof}

\section{Two-Layer Transformer with Attention Mask and PE}\label{sec:appendix:proof}
\subsection{Theorem \ref{thm:icl_no_emb}}
\begin{proof}[Proof of Theorem \ref{thm:icl_no_emb}]
    To prove Theorem \ref{thm:icl_no_emb}, since we assume $c=c_0\sqrt{d\log(d)}$, the positional encoding will be the dominate component when determining the attention score matrix.

We calculate $O_1$ step by step. For $(E+P)^{\top}W_{KQ,1}(E+P)$, we have
\begin{eqnarray*}
    E^{\top}W_{KQ,1}E=\frac{p}{d}\begin{bmatrix}
        v_x \|x_1\|^2 & 0 & v_x x_1^{\top}x_2 & 0 & v_x x_1^{\top}x_3 & \ldots & v_x x_1^{\top}x_q\\
        0 & v_y y_1^2 & 0 & v_y y_1y_2 & 0 & \ldots & 0\\
        v_x x_2^{\top}x_1 & 0 & v_x \|x_2\|^2 & 0 & v_x x_2^{\top}x_3 & \ldots & v_x x_2^{\top}x_q\\
        \ldots
    \end{bmatrix},
\end{eqnarray*}
\begin{eqnarray*}
    E^{\top}W_{KQ,1}P=\sqrt{\frac{p}{d}}\begin{bmatrix}
        v_x x_1^{\top}p_{11}^x & v_x x_1^{\top}p_{12}^x & v_x x_1^{\top}p_{21}^x & \ldots\\
        v_y y_1 p_{11}^y &  v_y y_1 p_{12}^y &  v_y y_1 p_{21}^y & \ldots\\
        \ldots
    \end{bmatrix},
\end{eqnarray*}
and
\begin{eqnarray*}
    P^{\top}W_{KQ,1}P=\begin{bmatrix}
        v_x \|p_{11}^x\|^2+v_y (p_{11}^y)^2 &\ldots\\\ldots
    \end{bmatrix}=c^2\begin{bmatrix}
        v_x & v_x & 0 & 0 & 0 & 0 & \ldots & 0\\
        v_x & v_x & 0 & 0 & 0 & 0 & \ldots & 0\\
       0& 0 & v_x & v_x & 0 & 0 & \ldots & 0\\
        0 & 0& v_x & v_x & 0 & 0 & \ldots & 0\\
        0 & 0 & 0 & 0 & v_x & v_x & \ldots & 0\\
        \ldots\\
        0 & 0 & 0 & 0 & 0 & 0 & \ldots & 0
    \end{bmatrix}.
\end{eqnarray*}

    For $E^{\top}W_{KQ,1}E$, the moment generating function of $x_i^{\top}x_j$ is
    \begin{eqnarray*}
        \mathbb{E}\exp(tx_i^{\top}x_j)=\mathbb{E}\exp(t^2\|x_j\|^2/2)=\frac{1}{\sqrt{2\pi}}\int\exp\left(-\frac{\|x_j\|^2}{2}(1-t^2)\right)dx_j=\frac{1}{(1-t)^{d/2}}.
    \end{eqnarray*}
    Using Chernoff bounds, we have for any $z>0$,
    \begin{eqnarray*}
        P(x_i^{\top}x_j>z)\leq \min_{t>0}\frac{\mathbb{E}\exp(tx_i^{\top}x_j)}{\exp(tz)}=\min_{t>0}\frac{1}{(1-t)^{d/2}}\frac{1}{\exp(tz)}\leq\exp\left(\frac{d}{2}\log 2-\frac{z}{2}\right).
    \end{eqnarray*}
    For $E^{\top}W_{KQ,1}P$, similarly, we know that
    \begin{eqnarray*}
        P(cx_{i1}>z)\leq \min_{t>0}\frac{\mathbb{E}\exp(tcx_{i1})}{\exp(tz)}\leq \exp\left( \frac{c^2}{8}-\frac{z}{2} \right).
    \end{eqnarray*}
    Therefore, for the attention score associated with the columns $O_{:,2i}$s, we have
    \begin{eqnarray*}
      &&\sup_{i} P\left(\sum_{j< i}\exp(v_xx_i^{\top}x_j+v_x x_j^{\top}p_{i2}^x)+i>\exp(v_xc^2) \right)\\
      &\leq & d\sup_{j}P\left(d\exp(v_xx_i^{\top}x_j+v_x x_j^{\top}p_{i2}^x)>\exp(v_xc_0^2d\log(d)) \right)(1+o(1))\\
      &\leq & d\sup_{j}P\left(2d\exp(v_xx_i^{\top}x_j)>\exp(v_xc_0^2d\log(d)) \right)(1+o(1))\\
      &&+d\sup_{j}P\left(2d\exp(v_x x_j^{\top}p_{i2}^x)>\exp(v_xc_0^2d\log(d)) \right)(1+o(1))\\
      &\leq&d^2\left[\exp\left(\frac{d}{2}\log 2-\frac{c_0^2d}{2}\log(d)\right)+\exp\left( \frac{c_0^2d\log(d)}{8}- \frac{c_0^2d\log(d)}{2} \right)\right](1+o(1)).
    \end{eqnarray*}
    While the above shows that $\exp(v_xc^2)$ dominates the sum of all the other attention scores in the same column, we also have $\exp(v_x\|x_i\|^2+v_xx_i^{\top}p_{i2}^x+v_xc^2)=\exp(v_xc^2)(1+o(1))$.
    As a result, for the columns $O_{:,2i}$s, the attention score for $(p_{i1}^x+x_i,0)$ and $(p_{i2}^x,y_i)$ will be both $1/2+o$, and all the other attention scores will be negligible.

\end{proof}
\subsection{Theorem \ref{prop:large_D}}

\begin{proof}[Proof of Theorem \ref{prop:large_D}]
    
From the definition of $E$ and $P$, we have
\begin{eqnarray*}
    &&E\phi\left( (E+P)^{\top}W_{KQ,1}(E+P)+M  \right)_{:,2i}\\
    &=&E\phi\left( (E+P)^{\top}W_{KQ,1}(E+P)_{:,2i}+M_{:,2i}  \right)\\
    &=&\frac{1}{B_i}\sum_{j\leq i}  \begin{bmatrix}
        x_j+p_{j1}^x\\
        0
    \end{bmatrix} \exp\left( v_x ( x_j+p_{j1}^x )^{\top}(p_{i2}^x)\right)+\frac{1}{B_i}\sum_{j<i} \begin{bmatrix}
        p_{j2}^x\\
        y_j
    \end{bmatrix}+\frac{1}{B_i}\begin{bmatrix}
        p_{i2}^x\\y_i
    \end{bmatrix}\exp(v_x\|p_{i1}\|^2)
\end{eqnarray*}
where
$$B_i=\sum_{j<i}\left[\exp(v_x x_j^{\top}p_{i2}^x)+\exp(v_x (p_{j2}^x)^{\top}p_{i1}^x )\right]+\exp(v_x x_i^{\top}p_{i2}^x+v_x c^2)+\exp(v_x c^2).$$
When $i\gg d$, we have
\begin{eqnarray*}
    &&\sum_{j\leq i}  \begin{bmatrix}
        x_j+p_{j1}^x\\
        0
    \end{bmatrix} \exp\left( v_x ( x_j+p_{j1}^x )^{\top}(p_{i2}^x)\right)\\
    &=&\sum_{j< i}\mathbb{E}  \begin{bmatrix}
        x_j+p_{j1}^x\\
        0
    \end{bmatrix} \exp\left( v_x ( x_j+p_{j1}^x )^{\top}(p_{i2}^x)\right)+\begin{bmatrix}
        x_i+p_{i1}^x\\0
    \end{bmatrix}\exp\left(v_x(x_i+p_{i1}^x)^{\top}p_{i2}^x\right)\\
    &&+\sum_{j< i}  \begin{bmatrix}
        x_j+p_{j1}^x\\
        0
    \end{bmatrix} \exp\left( v_x ( x_j+p_{j1}^x )^{\top}(p_{i2}^x)\right)\\
    &&-\sum_{j< i} \mathbb{E} \begin{bmatrix}
        x_j+p_{j1}^x\\
        0
    \end{bmatrix} \exp\left( v_x ( x_j+p_{j1}^x )^{\top}(p_{i2}^x)\right)\\
    &=&\underbrace{\sum_{j< i}  \begin{bmatrix}
        v_xp_{i2}^x+p_{j1}^x\\
        0
    \end{bmatrix} \exp\left(\frac{v_x^2}{2}\|p_{i2}^x\|^2+ v_x( p_{j1}^x )^{\top}p_{i2}^x\right)}_{:=\zeta_1}+\underbrace{\begin{bmatrix}
        x_i+p_{i1}^x\\0
    \end{bmatrix}\exp\left(v_x(x_i+p_{i1}^x)^{\top}p_{i2}^x\right)}_{:=\zeta_2}+\underbrace{O_p(\sqrt{di})}_{:=\zeta_3}.
\end{eqnarray*}

For simplicity, we assume $\|p_{jk}^x\|$s are all the same for $j=1,\ldots,D$ and $k=1,2$, and $\|p_{jk}^x\|\leq \sqrt{d}$. 

For $\zeta_2$, we have $x_i^{\top}p_{i2}^x=O_p(\|p_{i2}^x\|)$. When $i=\Theta(D)$,

\begin{enumerate}
    \item If $\|p_{jk}^x\|=o(1)$, then $\zeta_3$ will dominate $\zeta_2$.
    \item If $\|p_{jk}^x\|\gg 1$ and $v_x\gg 1$, then $\zeta_1$ will dominate $\zeta_2$ when $D\gg d^2$.
    \item If $\|p_{jk}^x\|\gg 1$ and $v_x=o(1)$, then taking $p_{i1}^x=p_{i2}^x$,
    $$\zeta_2=\begin{bmatrix}
        x_i+p_{i1}^x\\0
    \end{bmatrix}\exp(v_x(p_{i1}^x)^{\top}p_{i2}^x)+o.$$
    When $D\gg \exp(d)$, $\zeta_3$ will dominate $\zeta_2$.
\end{enumerate}

\end{proof}

\subsection{Theorem \ref{prop:icl_emb}}
\begin{proof}
From the definition of $E$ and $P$,  we obtain

\begin{eqnarray}
    &&W_{in}E\phi\left( (W_{in}E+P)^{\top}W_{KQ,1}(W_{in}E+P)+M  \right)_{:,2i-1}\nonumber\\
    &=&W_{in}E\phi\left( (W_{in}E+P)^{\top}W_{KQ,1}(W_{in}E+P)_{:,2i-1}+M_{:,2i-1}  \right)\nonumber.
\end{eqnarray}  
Based on the above, we assume $W_{KQ,1}$ and $W_{in}$ satisfy that
\begin{eqnarray*}
    W_{in}^{\top}W_{KQ,1}W_{in}=\begin{bmatrix}
        v_x I_d & 0 \\ 0 & 0
    \end{bmatrix},
\end{eqnarray*}
and all the singular values of $W_{in}$ are the same as $\kappa\sqrt{p/d}$ for some positive number $\kappa$. In addition, denote the original PE design in Theorem \ref{thm:icl_no_emb} as $P_0$ with columns $p_{jk}^0$, then take
\begin{eqnarray*}
    P=\sqrt{v_x}W_{KQ,1}^{-1/2}P_0.
\end{eqnarray*}
Then we obtain
\begin{eqnarray*}
    &&W_{in}E\phi\left( (W_{in}E+P)^{\top}W_{KQ,1}(W_{in}E+P)+M  \right)_{:,2i-1}\\
    &=&\frac{1}{A_i}\underbrace{\sum_{j\leq i}  \left(W_{in}\begin{bmatrix}
        x_j\\
        0
    \end{bmatrix}+p_{j1}\right) \exp\left( v_x x_i^{\top}x_j + v_x (p_{j1}^0)^{\top}p_{i1}^0+ \begin{bmatrix}
        x_i & 0
    \end{bmatrix}W_{in}^{\top}W_{KQ,1}^{1/2}p_{j1}^0+\begin{bmatrix}
        x_j & 0
    \end{bmatrix}W_{in}^{\top}W_{KQ,1}^{1/2}p_{i1}^0 \right)}_{:=\xi_1}\\
    &&+\frac{1}{A_i}\underbrace{\sum_{j<i} \left(W_{in}\begin{bmatrix}
        0\\
        y_j
    \end{bmatrix}+p_{j2}\right)}_{:=\xi_2},\label{eqn:token_x}
\end{eqnarray*}
where $$A_i=\sum_{j\leq i} \exp\left( v_x^2 ( x_j+p_{j1}^x )^{\top}(x_i+p_{i1}^x)\right)+\sum_{j<i}\exp(v_x^2 (p_{j2}^x)^{\top}p_{i1}^x ).$$

Below we calculate the expectation and the variance of $\xi_1$. When fixing $\theta$ and $x_i$
\begin{eqnarray*}
    &&\mathbb{E}\xi_1\\
    &=&\sum_{j<i}\mathbb{E}\left(W_{in}\begin{bmatrix}
        x_j\\
        0
    \end{bmatrix}+p_{j1}\right) \exp\left( v_x x_i^{\top}x_j + v_x (p_{j1}^0)^{\top}p_{i1}^0+ \sqrt{v_x}\begin{bmatrix}
        x_i & 0
    \end{bmatrix}W_{in}^{\top}W_{KQ,1}^{1/2}p_{j1}^0+\sqrt{v_x}\begin{bmatrix}
        x_j & 0
    \end{bmatrix}W_{in}^{\top}W_{KQ,1}^{1/2}p_{i1}^0 \right)\\
    &&+\left(W_{in}\begin{bmatrix}
        x_i\\
        0
    \end{bmatrix}+p_{j1}\right) \exp\left( v_x \|x_i\|^2 + v_x c^2 + 2\sqrt{v_x}\begin{bmatrix}
        x_i & 0
    \end{bmatrix}W_{in}^{\top}W_{KQ,1}^{1/2}p_{i1}^0\right)\\
    &=&\sum_{j<i}\mathbb{E}W_{in}\begin{bmatrix}
        x_j\\
        0
    \end{bmatrix}\exp\left( v_x x_i^{\top}x_j + v_x (p_{j1}^0)^{\top}p_{i1}^0+ \sqrt{v_x}\begin{bmatrix}
        x_i & 0
    \end{bmatrix}W_{in}^{\top}W_{KQ,1}^{1/2}p_{j1}^0+\sqrt{v_x}\begin{bmatrix}
        x_j & 0
    \end{bmatrix}W_{in}^{\top}W_{KQ,1}^{1/2}p_{i1}^0 \right)\\
    &&+\sum_{j<i}\mathbb{E}W_{in}p_{j1}\exp\left( v_x x_i^{\top}x_j + v_x (p_{j1}^0)^{\top}p_{i1}^0+\sqrt{v_x} \begin{bmatrix}
        x_i & 0
    \end{bmatrix}W_{in}^{\top}W_{KQ,1}^{1/2}p_{j1}^0+\sqrt{v_x}\begin{bmatrix}
        x_j & 0
    \end{bmatrix}W_{in}^{\top}W_{KQ,1}^{1/2}p_{i1}^0 \right)\\
    &&+\left(W_{in}\begin{bmatrix}
        x_i\\
        0
    \end{bmatrix}+p_{j1}\right) \exp\left( v_x \|x_i\|^2 + v_x c^2 + 2\sqrt{v_x}\begin{bmatrix}
        x_i & 0
    \end{bmatrix}W_{in}^{\top}W_{KQ,1}^{1/2}p_{i1}^0\right)\\
    &=&\sum_{j<i}v_x\left(W_{in}\begin{bmatrix}
         x_i\\
        0
    \end{bmatrix}+W_{in}^{\top}W_{KQ,1}^{1/2}p_{j1}^0\right)\\
    &&\qquad\qquad\qquad\times\exp\left( \frac{1}{2}\left\|v_x x_i+ v_x (W_{in}^{\top}W_{KQ,1}^{1/2}p_{i1}^0)_{1:d}\right\|^2+ v_x (p_{j1}^0)^{\top}p_{i1}^0 +\sqrt{v_x}\begin{bmatrix}
        x_i & 0
    \end{bmatrix}W_{in}^{\top}W_{KQ,1}^{1/2}p_{j1}^0 \right)\\
    &&+\sum_{j<i}W_{in}p_{j1}\exp\left( \frac{1}{2}\left\|v_x x_i+ v_x (W_{in}^{\top}W_{KQ,1}^{1/2}p_{i1}^0)_{1:d}\right\|^2+ v_x (p_{j1}^0)^{\top}p_{i1}^0 +\sqrt{v_x}\begin{bmatrix}
        x_i & 0
    \end{bmatrix}W_{in}^{\top}W_{KQ,1}^{1/2}p_{j1}^0 \right)\\
    &&+\left(W_{in}\begin{bmatrix}
        x_i\\
        0
    \end{bmatrix}+p_{j1}\right) \exp\left( v_x \|x_i\|^2 + v_x c^2 + 2\sqrt{v_x}\begin{bmatrix}
        x_i & 0
    \end{bmatrix}W_{in}^{\top}W_{KQ,1}^{1/2}p_{i1}^0\right)
\end{eqnarray*}
For the variance, we have
\begin{eqnarray*}
    &&\mathbb{E}\xi_1\xi_1^{\top}\\
    &=&\sum_{j\neq k}\mathbb{E}\left(W_{in}\begin{bmatrix}
        x_j\\0
    \end{bmatrix} +p_{j1}\right)\left(W_{in}\begin{bmatrix}
        x_k\\0
    \end{bmatrix} +p_{k1}\right)^{\top}\\
    &&\qquad\qquad\qquad \times  \exp\left( v_x x_i^{\top}x_j + v_x (p_{j1}^0)^{\top}p_{i1}^0+ \sqrt{v_x}\begin{bmatrix}
        x_i & 0
    \end{bmatrix}W_{in}^{\top}W_{KQ,1}^{1/2}p_{j1}^0+\sqrt{v_x}\begin{bmatrix}
        x_j & 0
    \end{bmatrix}W_{in}^{\top}W_{KQ,1}^{1/2}p_{i1}^0 \right)\\
    &&\qquad\qquad\qquad\times  \exp\left( v_x x_i^{\top}x_k + v_x (p_{k1}^0)^{\top}p_{i1}^0+ \sqrt{v_x}\begin{bmatrix}
        x_i & 0
    \end{bmatrix}W_{in}^{\top}W_{KQ,1}^{1/2}p_{j1}^0+\sqrt{v_x}\begin{bmatrix}
        x_k & 0
    \end{bmatrix}W_{in}^{\top}W_{KQ,1}^{1/2}p_{i1}^0 \right)\\
    &&+\sum_{j<i}\mathbb{E}\left(W_{in}\begin{bmatrix}
        x_j\\0
    \end{bmatrix} +p_{j1}\right)\left(W_{in}\begin{bmatrix}
        x_j\\0
    \end{bmatrix} +p_{j1}\right)^{\top} \\
    &&\qquad\qquad\qquad \times  \exp^2\left( v_x x_i^{\top}x_j + v_x (p_{j1}^0)^{\top}p_{i1}^0+ \sqrt{v_x}\begin{bmatrix}
        x_i & 0
    \end{bmatrix}W_{in}^{\top}W_{KQ,1}^{1/2}p_{j1}^0+\sqrt{v_x}\begin{bmatrix}
        x_j & 0
    \end{bmatrix}W_{in}^{\top}W_{KQ,1}^{1/2}p_{i1}^0 \right)\\
    &&+2\sum_{j<i}\mathbb{E}\left(W_{in}\begin{bmatrix}
        x_i\\
        0
    \end{bmatrix}+p_{i1}\right) \left(W_{in}\begin{bmatrix}
        x_j\\0
    \end{bmatrix} +p_{j1}\right)^{\top}W_{in}^{\top} \\
    &&\qquad\qquad\qquad\times  \exp\left( v_x x_i^{\top}x_j + v_x (p_{j1}^0)^{\top}p_{i1}^0+ \sqrt{v_x}\begin{bmatrix}
        x_i & 0
    \end{bmatrix}W_{in}^{\top}W_{KQ,1}^{1/2}p_{j1}^0+\sqrt{v_x}\begin{bmatrix}
        x_j & 0
    \end{bmatrix}W_{in}^{\top}W_{KQ,1}^{1/2}p_{i1}^0 \right)\\
     &&\qquad\qquad\qquad\times \exp\left( v_x \|x_i\|^2 + v_x c^2 + 2\sqrt{v_x}\begin{bmatrix}
        x_i & 0
    \end{bmatrix}W_{in}^{\top}W_{KQ,1}^{1/2}p_{i1}^0\right)\\
    &&+\left(W_{in}\begin{bmatrix}
        x_i\\
        0
    \end{bmatrix}+p_{i1}\right) \left(W_{in}\begin{bmatrix}
        x_j\\0
    \end{bmatrix} +p_{j1}\right)^{\top}\\
    &&\qquad\qquad\qquad\times \exp^2\left( v_x \|x_i\|^2 + v_x c^2 + 2\sqrt{v_x}\begin{bmatrix}
        x_i & 0
    \end{bmatrix}W_{in}^{\top}W_{KQ,1}^{1/2}p_{i1}^0\right).
\end{eqnarray*}
Then
\begin{eqnarray*}
    &&\mathbb{E}\xi_1\xi_1^{\top}-\mathbb{E}\xi_1\mathbb{E}\xi_1^{\top}\\
    &=&\sum_{j<i}\mathbb{E}\left(W_{in}\begin{bmatrix}
        x_j\\0
    \end{bmatrix} +p_{j1}\right)\left(W_{in}\begin{bmatrix}
        x_j\\0
    \end{bmatrix} +p_{j1}\right)^{\top}\\
    &&\qquad\qquad\qquad \times  \exp^2\left( v_x x_i^{\top}x_j + v_x (p_{j1}^0)^{\top}p_{i1}^0+ \sqrt{v_x}\begin{bmatrix}
        x_i & 0
    \end{bmatrix}W_{in}^{\top}W_{KQ,1}^{1/2}p_{j1}^0+\sqrt{v_x}\begin{bmatrix}
        x_j & 0
    \end{bmatrix}W_{in}^{\top}W_{KQ,1}^{1/2}p_{i1}^0 \right)\\
    &&-\sum_{j<i}\mathbb{E}\left(W_{in}\begin{bmatrix}
        x_j\\0
    \end{bmatrix} +p_{j1}\right)\\
    &&\qquad\qquad\qquad \times \exp\left( v_x x_i^{\top}x_j + v_x (p_{j1}^0)^{\top}p_{i1}^0+ \sqrt{v_x}\begin{bmatrix}
        x_i & 0
    \end{bmatrix}W_{in}^{\top}W_{KQ,1}^{1/2}p_{j1}^0+\sqrt{v_x}\begin{bmatrix}
        x_j & 0
    \end{bmatrix}W_{in}^{\top}W_{KQ,1}^{1/2}p_{i1}^0 \right)\\
    &&\qquad\times \mathbb{E}\left(W_{in}\begin{bmatrix}
        x_j\\0
    \end{bmatrix} +p_{j1}\right)^{\top} \\
    &&\qquad\qquad\qquad \times\exp\left( v_x x_i^{\top}x_j + v_x (p_{j1}^0)^{\top}p_{i1}^0+ \sqrt{v_x}\begin{bmatrix}
        x_i & 0
    \end{bmatrix}W_{in}^{\top}W_{KQ,1}^{1/2}p_{j1}^0+\sqrt{v_x}\begin{bmatrix}
        x_j & 0
    \end{bmatrix}W_{in}^{\top}W_{KQ,1}^{1/2}p_{i1}^0 \right).
\end{eqnarray*}

Consider the expectation of $Wxx^{\top}W^{\top}\exp(x^{\top}W^{\top}z)$, it becomes
\begin{eqnarray*}
   \mathbb{E}Wxx^{\top}W^{\top}\exp(2x^{\top}W^{\top}z)=WW^{\top}\exp\left(2\|W^{\top}z\|^2\right) +4WW^{\top}zz^{\top}WW^{\top}\exp\left(2\|W^{\top}z\|^2\right).
\end{eqnarray*}
Meanwhile,
\begin{eqnarray*}
    \mathbb{E}Wx\exp(x^{\top}W^{\top}z)\mathbb{E}x^{\top}W^{\top}\exp(x^{\top}W^{\top}z)=WW^{\top}zz^{\top}WW^{\top}\exp(\|W^\top z\|^2).
\end{eqnarray*}
The notation $z$ in the above two equations represents
\begin{eqnarray*}
    z=W_{KQ,1}W_{in}x_i+\sqrt{v_x}W_{KQ,1}^{1/2}p_{i1}^0,
\end{eqnarray*}
and
\begin{eqnarray*}
    W^{\top}_{in}z=v_x x_i+\sqrt{v_x}W_{in}^{\top}W_{KQ,1}^{1/2}p_{i1}^0.
\end{eqnarray*}
We know that all the singular values of $W_{in}$ are $\kappa\sqrt{p/d}$, thus
\begin{eqnarray*}
    \|W_{in}W^{\top}_{in}\|_F^2=\kappa^2 p,\quad \|W_{in}W^{\top}_{in}zz^{\top}W_{in}W^{\top}_{in}\|_F^2=\kappa^2\frac{p}{d}\|v_x x_i+\sqrt{v_x}W_{in}^{\top}W_{KQ,1}^{1/2}p_{i1}^0\|^2 .
\end{eqnarray*}
When $c\gg\sqrt{d}\log p$, we have $\xi_1$ concentrates on the $x_i$ term, i.e.,
\begin{eqnarray*}
    \xi_1 =\left( \left(W_{in}\begin{bmatrix}
        x_i\\
        0
    \end{bmatrix}+p_{j1}\right) \exp\left( v_x \|x_i\|^2 + v_x c^2 + 2\sqrt{v_x}\begin{bmatrix}
        x_i & 0
    \end{bmatrix}W_{in}^{\top}W_{KQ,1}^{1/2}p_{i1}^0\right)\right)(1+o(1)).
\end{eqnarray*}

On the other hand, since $A_i\rightarrow\mathbb{E}A_i$, we also have $$\frac{1}{A_i}-\frac{1}{\mathbb{E}A_i}=O_p\left(\frac{1}{\sqrt{i}}\right).$$
Therefore,
\begin{eqnarray*}
    &&W_{in}E\phi\left( (W_{in}E+P)^{\top}W_{KQ,1}(W_{in}E+P)+M  \right)_{:,2i-1}\\
    &=&\frac{1}{\mathbb{E}A_i}\left( \left(W_{in}\begin{bmatrix}
        x_i\\
        0
    \end{bmatrix}+p_{j1}\right) \exp\left( v_x \|x_i\|^2 + v_x c^2 + 2\sqrt{v_x}\begin{bmatrix}
        x_i & 0
    \end{bmatrix}W_{in}^{\top}W_{KQ,1}^{1/2}p_{i1}^0\right)\right) +o\\
    &=&\left(W_{in}\begin{bmatrix}
        x_i\\
        0
    \end{bmatrix}+p_{j1}\right)+o,
\end{eqnarray*}
and
\begin{eqnarray*}
    O_{:,2i-1}=(v+1)\left(W_{in}\begin{bmatrix}
        x_i\\
        0
    \end{bmatrix}+p_{j1}\right)+o.
\end{eqnarray*}

In terms of $O_{:,2i}$, similarly,

\begin{eqnarray*}
    O_{:,2i}=\frac{v}{2}\left(W_{in}\begin{bmatrix}
        x_j\\y_j
    \end{bmatrix} +p_{j1}+p_{j2}\right)+\left(W_{in}\begin{bmatrix}
        0\\y_j
    \end{bmatrix} +p_{j2}\right)+o.
\end{eqnarray*}

\end{proof}

\subsection{Theorem \ref{thm:icl_prediction}}
\begin{proof}[Proof of Theorem \ref{thm:icl_prediction}]
    When \begin{eqnarray*}
    E_{:,2i-1}=(v+1)\left(W_{in}\begin{bmatrix}
        x_i\\
        0
    \end{bmatrix}+p_{i}\right),
\end{eqnarray*}
and
\begin{eqnarray*}
    E_{:,2i}=\frac{v}{2}W_{in}\begin{bmatrix}
        x_i\\0
    \end{bmatrix} +\left( \frac{v}{2}+1 \right)W_{in}\begin{bmatrix}
        0\\y_i
    \end{bmatrix} +(v+1)p_{i},
\end{eqnarray*}
one can obtain that
\begin{eqnarray*}
    &&EE^{\top}\\
    &=& (v+1)^2\sum\left(W_{in}\begin{bmatrix}
        x_i\\
        0
    \end{bmatrix}+p_{i}\right)\left(W_{in}\begin{bmatrix}
        x_i\\
        0
    \end{bmatrix}+p_{i}\right)^{\top}\\
    &&+\sum \left(\frac{v}{2}W_{in}\begin{bmatrix}
        x_i\\0
    \end{bmatrix} +\left( \frac{v}{2}+1 \right)W_{in}\begin{bmatrix}
        0\\y_i
    \end{bmatrix} +(v+1)p_{i}\right)\left(\frac{v}{2}W_{in}\begin{bmatrix}
        x_i\\0
    \end{bmatrix} +\left( \frac{v}{2}+1 \right)W_{in}\begin{bmatrix}
        0\\y_i
    \end{bmatrix} +(v+1)p_{i}\right)^{\top}\\
    &&+(v+1)\sum\left(W_{in}\begin{bmatrix}
        x_i\\
        0
    \end{bmatrix}+p_{i}\right)\left(\frac{v}{2}W_{in}\begin{bmatrix}
        x_i\\0
    \end{bmatrix} +\left( \frac{v}{2}+1 \right)W_{in}\begin{bmatrix}
        0\\y_i
    \end{bmatrix} +(v+1)p_{i}\right)^{\top}\\
    &&+(v+1)\sum\left(\frac{v}{2}W_{in}\begin{bmatrix}
        x_i\\0
    \end{bmatrix} +\left( \frac{v}{2}+1 \right)W_{in}\begin{bmatrix}
        0\\y_i
    \end{bmatrix} +(v+1)p_{i}\right)\left(W_{in}\begin{bmatrix}
        x_i\\
        0
    \end{bmatrix}+p_{i}\right)^{\top}\\
    &=&\left((v+1)^2+\frac{v^2}{4}+v(v+1)\right)\underbrace{W_{in}\left(\sum \begin{bmatrix}
        x_ix_i^{\top} & 0\\
        0 & 0
    \end{bmatrix}\right)W_{in}^{\top}}_{:=C_1}\\
    &&+\left( 2(v+1)^2+v(v+1) \right)\underbrace{\sum W_{in}\begin{bmatrix}
        x_i\\0
    \end{bmatrix}p_i^{\top}+p_i\begin{bmatrix}
        x_i\\0
    \end{bmatrix}^{\top}W_{in}^{\top}}_{:=C_2}\\
    &&+4(v+1)^2\underbrace{\sum p_ip_i^{\top}}_{:=C_3}+\left(\frac{v}{2}+1(v+1)\left(\frac{v}{2}+1\right)\right)\underbrace{\sum W_{in}\begin{bmatrix}
        0\\y_i
    \end{bmatrix}p_i^{\top}+p_i\begin{bmatrix}
        0\\y_i
    \end{bmatrix}^{\top}W_{in}^{\top}}_{:=C_4}\\
    &&+\left(\left(\frac{v}{2}+1\right)^2+\left(\frac{v}{2}+1\right)(v+1)\right)\underbrace{W_{in}\left(\sum \begin{bmatrix}
        0 & y_ix_i\\
        y_ix_i^{\top} & 0
    \end{bmatrix}\right)W_{in}^{\top}}_{:=C_5}\\
    &:=& v_1C_1+v_2C_2+v_3C_3+v_4C_4+v_5C_5.
\end{eqnarray*}
One can see that $C_2$ and $C_4$ negligible compared to $C_1$, $C_3$, and $C_5$. In addition,
\begin{eqnarray*}
    C_1&=&W_{in}\left(\sum \begin{bmatrix}
        x_ix_i^{\top} & 0\\
        0 & 0
    \end{bmatrix}\right)W_{in}^{\top}-W_{in}\mathbb{E}\left(\sum \begin{bmatrix}
        x_ix_i^{\top} & 0\\
        0 & 0
    \end{bmatrix}\right)W_{in}^{\top}\\
    &&+W_{in}\mathbb{E}\left(\sum \begin{bmatrix}
        x_ix_i^{\top} & 0\\
        0 & 0
    \end{bmatrix}\right)W_{in}^{\top}\\
    &=&DW_{in}W_{in}^{\top}\left(1+O_p\left(\sqrt{\frac{d}{D}}\right)\right),
\end{eqnarray*}
and \begin{eqnarray*}
    C_5=DW_{in}\begin{bmatrix}
        0 & \theta\\
        \theta^{\top} & 0
    \end{bmatrix}W_{in}\left(1+O_p\left(\sqrt{\frac{d}{D}}\right)\right).
\end{eqnarray*}

Given the input format of $E$, the prediction of $y_q$ satisfies
\begin{eqnarray*}
    \widehat y_q &=&  w_v^{\top}\left( EE^{\top}\left(W_{in}\begin{bmatrix}
        x_q\\0
    \end{bmatrix}+p_{D+1}\right)\right)\\
    &=&w_v^{\top}\left( v_1 D W_{in}W_{in}^{\top}+v_3\sum p_ip_i^{\top} + v_5DW_{in}\begin{bmatrix}
        0 & \theta\\
        \theta^{\top} & 0
    \end{bmatrix}W_{in}^{\top}\right)\left(W_{in}\begin{bmatrix}
        x_q\\0
    \end{bmatrix}+p_{D+1}\right),
\end{eqnarray*}
for some $w_v$.

Taking $$W_{in}=\begin{bmatrix}
    1 & 0 & 0 & \ldots\\ 
    1 & 0 & 0 & \ldots\\
    \ldots\\
    0 & 1 & 0 & \ldots\\
    0 & 1 & 0 & \ldots\\
    \ldots\\
    0 & 0 & 1 & \ldots\\
    0 & 0 & 1 & \ldots\\
    \ldots
\end{bmatrix},$$
and denoting $m=p/d$, we have
\begin{eqnarray*}
    W_{in}^{\top}W_{in}=m I_d,
\end{eqnarray*}
thus
\begin{eqnarray*}
&&w_v^{\top}\left( v_1 D W_{in}W_{in}^{\top}+v_3\sum p_ip_i^{\top}+v_5DW_{in}\begin{bmatrix}
        0 & \theta\\
        \theta^{\top} & 0
    \end{bmatrix}W_{in}^{\top} \right)W_{in}\begin{bmatrix}
        x_q\\0
    \end{bmatrix}\\
&=&w_v^{\top}\left( v_1mDW_{in} + v_3 c^2\begin{bmatrix}
    I_{D}& 0\\
    0 & 0
\end{bmatrix}W_{in}+v_5 mDW_{in}\begin{bmatrix}
    0 & \theta\\
    \theta^{\top} & 0
\end{bmatrix}  \right)  \begin{bmatrix}
    x_q\\0
\end{bmatrix}\\
&=&w_v^{\top}\left( v_1mDW_{in} \begin{bmatrix}
    x_q\\0
\end{bmatrix} + v_3 c^2\begin{bmatrix}
    I_{D}& 0\\
    0 & 0
\end{bmatrix}W_{in} \begin{bmatrix}
    x_q\\0
\end{bmatrix}+v_5 mDW_{in}\begin{bmatrix}
    0 \\ y_q
\end{bmatrix}  \right).
\end{eqnarray*}
When taking 
$$w_v^{\top}=\frac{1}{v_5 m D} \begin{bmatrix}
    0 & 0 & \ldots & 0 & 1
\end{bmatrix},$$
we have
\begin{eqnarray*}
    w_v^{\top}W_{in}=\begin{bmatrix}
        0 & 0 & \ldots & 0 & 1
    \end{bmatrix},
\end{eqnarray*}
and
\begin{eqnarray*}
    w_v^{\top}\left( v_1 D W_{in}W_{in}^{\top}+v_3\sum p_ip_i^{\top}+v_5DW_{in}\begin{bmatrix}
        0 & \theta\\
        \theta^{\top} & 0
    \end{bmatrix}W_{in}^{\top} \right)W_{in}\begin{bmatrix}
        x_q\\0
    \end{bmatrix}=y_q.
\end{eqnarray*}
On the other hand,
\begin{eqnarray*}
    &&w_v^{\top}\left( v_1 D W_{in}W_{in}^{\top}+v_3\sum p_ip_i^{\top} + v_5DW_{in}\begin{bmatrix}
        0 & \theta\\
        \theta^{\top} & 0
    \end{bmatrix}W_{in}^{\top}\right)p_{D+1}\\
    &=&w_v^{\top} c\left( v_1 D W_{in}e_{\lceil(D+1)/m\rceil} + v_5DW_{in}\begin{bmatrix}
        0 & \theta\\
        \theta^{\top} & 0
    \end{bmatrix}e_{\lceil(D+1)/m\rceil}\right)\\
    &=& 0.
\end{eqnarray*}
That is, the PE does not results in any bias in the ICL prediction in the second layer, and $\mathbb{E}(\widehat y_q-y_q)^2=O(d/D)$.
\end{proof}

\newpage
\section{Configuration Details}\label{sec:appendix:configuration}

    
Table \ref{tab:code} below shows how we implement the changes in the components of the transformer.

\begin{table}[!ht]
    \centering
    \begin{tabular}{c|c}
     Purpose & Change \\\hline
        Change number of layers. & Change GPT2 configuration.\\
        Change input embedding. & Change the configuration in \cite{garg2022can}.\\
         Change PE. & Change \texttt{wpe} in the GPT2 model.  \\
         Remove attention mask. & Set all elements to 1 in \texttt{bias} in the attention layer. \\
         Set $W_{KQ,1}$ as zero. & Force the first 2/3 rows of \texttt{c\_attn.weights} and intercept in the attention as zero.
    \end{tabular}
    \caption{Instruction in controlling the components in the transformer.}
    \label{tab:code}
\end{table}

\section{Other Additional Experiment Results}\label{sec:appendix:other}


\begin{figure}[!ht]
    \centering
    \includegraphics[width=0.8\linewidth]{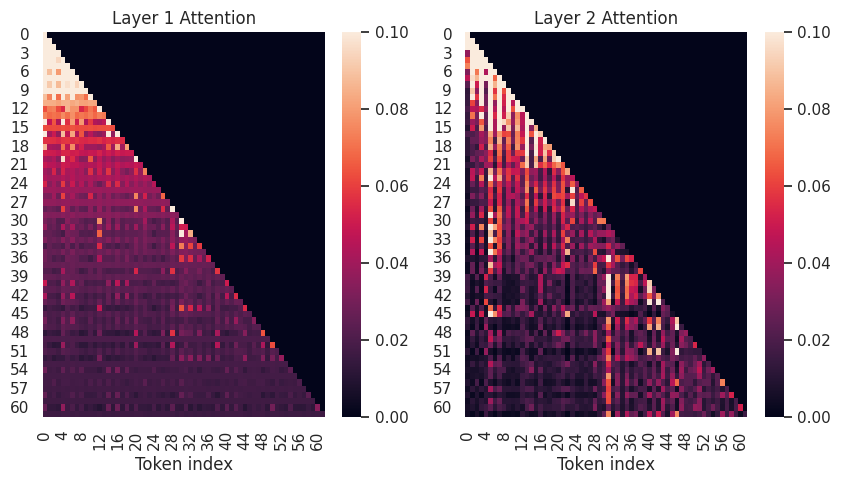}\vspace{-0.1in}
    \caption{Full attention scores on single head, two layer, with mask, no PE, $E_2$ format. There are 30+1 pairs of x and y, thus 31 * 2 = 62 tokens in the prompt. Only attentions score from one example is shown. Each rows is the attention of one token.}
    \label{fig:attn_no_pos_full}
\end{figure}
\begin{figure}[!ht]
    \centering
    \includegraphics[width=0.8\linewidth]{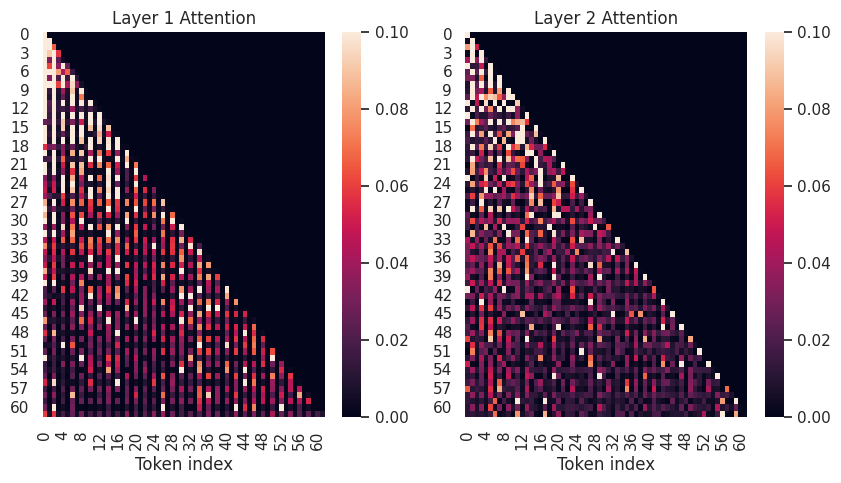}\vspace{-0.1in}
    \caption{Full attention scores on single head, two layer, with mask, with PE, $E_2$ format.}
    \label{fig:attn_pos_full}
\end{figure}

\newpage
\section{Additional Simulations}\label{sec:appendix:simulation}

\subsection{Ten Repetitions}
Figure \ref{fig:10_repetition} shows the ICL performance of 10 different repetitions under each setting.
\begin{figure}[!ht]
    \centering
    \includegraphics[scale=0.55]{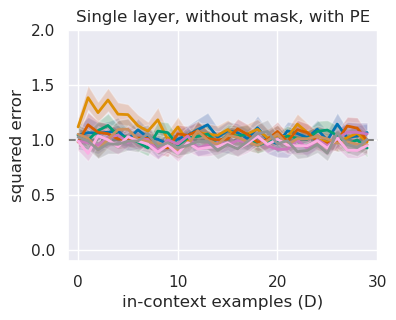}
    \includegraphics[scale=0.55]{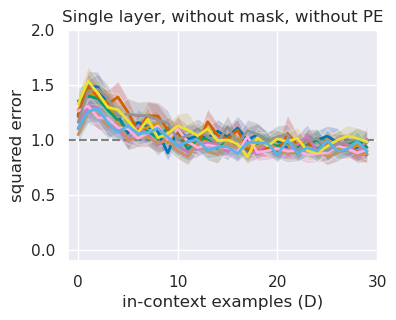}
    \includegraphics[scale=0.55]{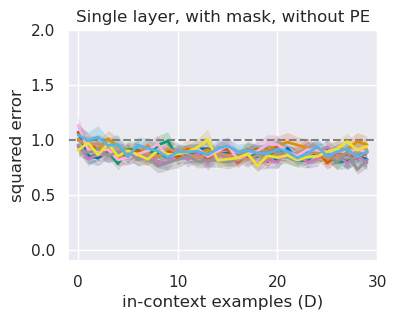}
    \includegraphics[scale=0.55]{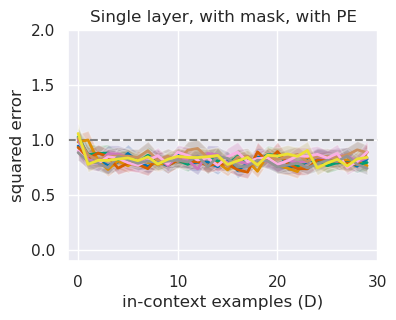}
    \includegraphics[scale=0.55]{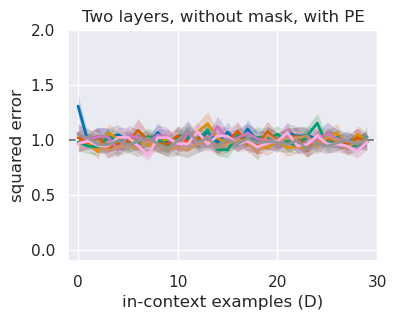}
    \includegraphics[scale=0.55]{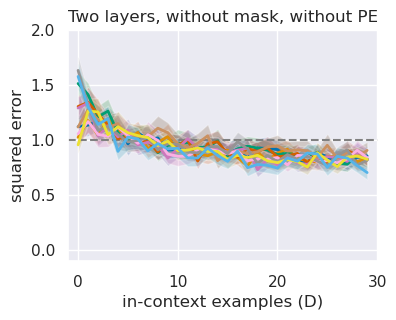}
    \includegraphics[scale=0.55]{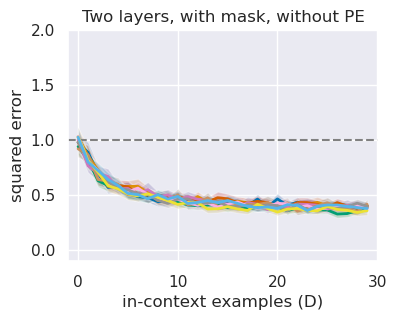}
    \includegraphics[scale=0.55]{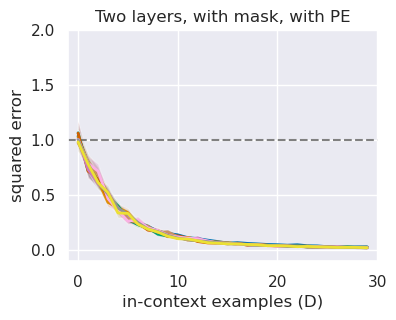}
    \caption{Ten repetitions of the simulations for one/two-layer transformers with/without PE and with/without mask.}
    \label{fig:10_repetition}
\end{figure}

\subsection{Additional Simulation Regarding PE}

In this simulation, we aim to verify the effectiveness of PE given large $p$, and examine the performance when $D$ keeps growing. We follow the implementation of \cite{garg2022can}\footnote{\url{https://github.com/dtsip/in-context-learning}} in this experiment. We take $d=5$ and $p=12$. We configure different number of examples ($D$) in the experiment. 

From Table \ref{tab:pe}, there are two observations. First, comparing with Table \ref{tab:new_pe}, when $D=30$, $p=64$ gives a better ICL performance. Second, when taking $p=12$ and further increasing $D$ to 42, the ICL performance is still worse than Table \ref{tab:new_pe}.



\begin{table}[!ht]
\centering
\begin{tabular}{ccccc}
\toprule
& \multicolumn{2}{c}{Average} & \multicolumn{2}{c}{Std}\\
$D$ &Train loss & Test loss & Train loss & Test loss\\\hline
6  & 0.5987 & 0.6803 & 0.0611 & 0.0153 \\
12 & 0.4655 & 0.4936 & 0.0497 & 0.0070 \\
18 & 0.4248 & 0.4099 & 0.0584 & 0.0497 \\
24 & 0.3940 & 0.3870 & 0.1241 & 0.0984 \\
30 & 0.3855 & 0.3913 & 0.1751 & 0.1855 \\
36 & 0.3192 & 0.3106 & 0.1314 & 0.1152 \\
42 & 0.2228 & 0.2310 & 0.0212 & 0.0068\\\bottomrule
\end{tabular}
\caption{ICL performance when increasing $D$. $d=5$, $p=12$. Benchmark: When $d=5$, $p=64$, $D=30$, the test loss is 0.1877 in Table \ref{tab:new_pe}.}\label{tab:pe}
\end{table}

\end{document}